\documentclass[lettersize,journal]{IEEEtran}

\newcommand*{\QEDA}{\hfill \ensuremath{\blacksquare}}
\usepackage[utf8x]{inputenc}
\usepackage{scalefnt}
\usepackage{stackengine}

\usepackage{bbm}
\usepackage{xcolor}
\usepackage{cite}
\usepackage[ruled,vlined]{algorithm2e}
\SetAlCapHSkip{0pt}
\setlength{\algomargin}{0pt}
\usepackage{footnote}
\makesavenoteenv{tabular}
\makesavenoteenv{table}
\usepackage{listings}
\usepackage{amsmath}
\usepackage{amsthm,amssymb,graphicx,multirow,amsmath,color,amsfonts}

\def\nb0{{\mathbf{0}}}
\def\nb1{{\mathbf{1}}}







\newtheorem{lemma}{Lemma}

\newtheorem{theorem}{Theorem}

\newtheorem{cor}{Corollary}

\newtheorem{assumption}{Assumption}
	

%










\usepackage{dsfont}
\allowdisplaybreaks 
\usepackage{color} 
\definecolor{mygreen}{RGB}{28,172,0} 
\usepackage{tikz}
\usepackage{lipsum,adjustbox}
\usetikzlibrary{calc,positioning}
\usepackage{pst-tree,array} 
\definecolor{mylilas}{RGB}{170,55,241}
\usepackage{graphicx}
 \usepackage{multirow}
\usepackage{enumitem}
\usepackage{booktabs}
 
\usepackage{verbatim}
\usepackage{subcaption}

\title{Multi-agent Off-policy Actor-Critic Reinforcement Learning for Partially Observable Environments}
\author{Ainur Zhaikhan and Ali H. Sayed

\thanks{A. Zhaikhan and A. H. Sayed are with the Adaptive
Systems Laboratory, \'Ecole Polytechnique F\'ed\'erale de Lausanne (EPFL),
CH-1015, Switzerland. Emails: ainur.zhaikan@epfl.ch and ali.sayed@epfl.ch}}
\begin{document}
\maketitle

\begin{abstract}
This study proposes the use of a social learning method to estimate a global state within a multi-agent off-policy actor-critic algorithm for reinforcement learning (RL) operating in a partially observable environment. We assume that the network of agents operates in a fully-decentralized manner, possessing the capability to exchange variables with their immediate neighbors. The proposed design methodology is supported by an analysis demonstrating that the difference between final outcomes, obtained when the global state is fully observed versus estimated through the social learning method, is $\varepsilon$-bounded when an appropriate number of iterations of social learning updates are implemented. Unlike many existing dec-POMDP-based RL approaches, the proposed algorithm is suitable for model-free multi-agent reinforcement learning as it does not require knowledge of a transition model. Furthermore, experimental results illustrate the efficacy of the algorithm and demonstrate its superiority over the current state-of-the-art methods.
\end{abstract}
\begin{IEEEkeywords}
multi-agent, decentralized, off-policy, actor-critic, dec-POMDP
\end{IEEEkeywords}
\section{Introduction}
\label{sec:Intro}
 Collaborative multi-agent reinforcement learning is a prominent area of research due to its potential to achieve outcomes that are either inefficiently achieved or altogether unattainable by individual agents. However, this collaborative approach introduces its own set of complexities and constraints. One fundamental challenge is the issue of partial observability. When multiple agents are deployed across an environment, they may only have access to limited information, observing only specific portions of the overall environment. \par

Most traditional adaptations of single-agent reinforcement learning (RL) to multi-agent scenarios assume full observability of the state variable and require joint knowledge among agents. Some MARL algorithms adopt centralized training and decentralized execution (CTDE) paradigms, such as MADDPG \cite{MADPPG}, QMIX \cite{QMIX}, and MAVEN \cite{ MAVEN}, or Independent Learning (IL) paradigms \cite{IL1}, which still involve restrictive assumptions that may not be applicable to complex scenarios. In CTDE, each agent can access the actions and observations of all other agents during the learning phase, which are required for updating variables of interest. However, during the execution phase, the policies followed by each agent are private and based solely on their local observations. In Independent Learning, agents ignore the learning of other agents and treat the effects of other agents as environmental changes, potentially undermining the stability of the overall model.\par 
The Decentralized Partially Observable Markov Decision Process (Dec-POMDP) setting, which perfectly describes partially observable MARL scenarios, is proven to be an NEXP-hard problem and lacks a formal solution. Consequently, understanding and addressing these partial observability scenarios is crucial for the practical application of multi-agent reinforcement learning. \par 

In this work, we extend the multi-agent off-policy actor-critic (MAOPAC) algorithm \cite{SUTTLE20201549}, originally designed for fully observable environments, to handle partially observable settings. The primary reason for focusing on this MARL setting is that off-policy learning \cite{Off_Policy1, Off_Policy2, Off_Policy3, Off_Policy4} provides a more general setting compared to its on-policy counterpart, as it allows the behavioral policy to differ from the estimated target policy. Moreover, off-policy algorithms offer the flexibility to adjust agents' behavior, thereby enhancing the efficiency of their exploration strategies. Additionally, MAOPAC is an algorithm with theoretical convergence guarantees. Consequently, by establishing the convergence of our proposed method in partially observed environments, we ensure that our results also guarantee convergence to the optimal solution. \par 
Achieving full decentralization in actor-critic approaches is inherently challenging because they depend on shared variables for learning \cite{MA_POMDP} and also suffer from partial state observability. To overcome this challenge, we propose leveraging {\em social learning} strategies \cite{SL_1,SL2,SL3} to estimate the global state in a {\em fully decentralized} manner. Under these strategies,  agents  estimate belief vectors using local observations and then iteratively diffuse these estimates to their immediate neighbors. Existing works that offer a fully decentralized solution often rely on neural network implementations which can be complex and challenging to analyze. Furthermore, unlike many existing algorithms, the proposed method does not necessitate transition models for state estimation, thereby classifying it as a {\em model-free} reinforcement learning algorithm. This attribute enhances its convenience and practicality for real-world applications. 
 The proposed method is supported with  theoretical guarantees. Specifically, we derive conditions for estimating the global state, ensuring that the ultimate error in the policy parameter estimation is bounded by $\varepsilon$. We also provide specific insights tailored to a Boltzmann policy model, providing a more comprehensive understanding of our proposed framework. Through empirical evaluation and analysis, we demonstrate the effectiveness and robustness of our approach in addressing the challenges posed by partially observable multi-agent environments. Additionally, we benchmark our method against the zero-th order policy optimization (ZOPO) approach, which, to the best of our knowledge, stands as the most recent model-free method tailored for partially observable multi-agent settings. Through experimental validation, we illlustrate the superiority of our proposed algorithm in terms of performance. Furthermore, our method offers greater convenience in implementation, thereby presenting a compelling advantage over existing alternatives.\par
 The organization of the paper is as follows. Section \ref{sec:state of the art} gives an overview of the current state of the art. Section \ref{sec: Preliminary} provides preliminary information relevant to the algorithm. The details of the proposed algorithm are given in Section \ref{sec:algorithm}.  Analytical verification of the algorithm is conducted in Section \ref{sec:Theory}, followed by empirical validation in Section \ref{sec:experiment}. Lastly, Section \ref{sec:Conclusion} presents the concluding remarks.
\section{State of the art}
\label{sec:state of the art}
\subsection{Settings} Multi-agent reinforcement learning within a partially observable environment presents a complex challenge, often addressed within various settings. One such scheme is independent learning \cite{IL1}, where agents disregard the influence of others and interpret their actions as changes in the environment. In the context of partial observability, this approach actually hides the challenge posed by partial observability.   IL assumes that the joint optimal policy can be learnt  relying solely on local observations. However, this assumption may be valid only for basic problems, since treating other agents' behavior as environment may introduce instability to the underlying Markov Decision Process (MDP). \par
An alternative framework that is used in the literature is one that is based on a centralized training and decentralized execution (CTDE). Under this setting, agents need global information during the training phase but can rely solely on local observations for decision-making during execution. Notable CTDE algorithms include MADPPG \cite{MADPPG}, Q-mix \cite{QMIX}, and MAVEN \cite{MAVEN}. 
Despite being a more effective framework than IL, CTDE remains restrictive due to its requirement for joint training. \par  
Coordinated multi-agent RL (CoMARL)  proposed in \cite{Coordinated_MARL}, presents a more sophisticated setting due to considering the mutual influence among agents. However, even this approach does not entirely meet the demand for completely decentralized MARL with partial observability. It assumes existence of independent subgroups capable of determining the global state through centralized learning within each group.  In reality, situations may arise where observations from all agents are necessary to accurately identify the global state. In this case,  CoMARL basically becomes a centralized solution.
In our method, we delve into the scenario where agents remain {\em decentralized} both during training and decision-making phases while accounting for {\em partial} observability of the global state. \par 
In single-agent scenarios, reinforcement learning with partial observability is effectively addressed using Partially Observable Markov Decision Processes (POMDPs). However, its multi- agent extension, known as Dec-POMDP, has been proven to be  Non-deterministic Exponential-time (NEXP)
hard. Dec-POMDPs are inherently difficult to solve, with no formal solution available except for some approximate methods tailored to specific scenarios. Despite this, Dec-POMDPs remain widely used in MARL literature as they offer a manageable environment for modeling and analysis.
\subsection{State estimation in model-based learning}
In the approaches given in \cite{Similar_1, POMDP_IAC, POMDP_model},  the probability of global states, i.e., the belief vectors, are estimated using Bayesian rules. However, employing Bayesian updates in these contexts necessitates knowledge of transition models, posing a challenge for model-free reinforcement learning (RL) problems. Consequently, these approaches are more directly related to model-based RL or planning schemes, such as dynamic programming, where complete model information is available \cite{POMDP_book}. Additionally, the algorithm proposed in \cite{POMDP_model} focuses on policy evaluation rather than finding the optimal policy. In contrast, our proposed method is designed to be model-free, eliminating the need for a state transition model. It aims not only to evaluate given policies but also to discover the optimal policy in partially observable multi-agent environments.

\subsection{State estimation in model-free RL}

\subsubsection{Neural Network-based Approaches}
Some solutions for model-free MARL with partially observed states are based on neural network (NN) architectures \cite{Neural_POMDP, Actor_critic_POMDP, POMDP_RNN, Mean_field}. Algorithms like those in \cite{Actor_critic_POMDP,Neural_POMDP, Belief1, LSTM, NN_POMDP} are examples of NN-based solutions tailored for single-agent partially observable problems. Multi-agent extensions are discussed in \cite{POMDP_RNN, NN_model_free, NN_model_free2}. In most NN-based approaches, the primary concept involves using neural networks to directly map observations to observation-action values. However, replacing state-action values with observation-action values may not consistently yield effective results, as the success of the algorithm depends on how well the chosen model approximates the estimated values. NN-based solutions are typically considered brute-force methods and do not offer a suitable environment for analysis. NN-based algorithms mainly compete with each other in terms of   implementation characteristics such as time efficiency and complexity of NN architectures, relying largely on   experimental evidence. Our focus in this paper is on an analysis-based approaches. Given the difference in our objectives, we will therefore avoid comparing the proposed method with NN-based schemes.

\subsubsection{Non-neural-based Approach}
A non-NN-based model-free approach has been proposed in \cite{MA_POMDP}. This algorithm localizes all learning processes by decomposing cumulative reward estimates using zeroth-order policy optimization (ZOPO). A convergence proof is provided; however, it is only applicable for on-policy and Monte Carlo-based learning, such as the Reinforce algorithm. On-policy learning, as described earlier, is a more restrictive setting than off-policy learning, which is the focus of our work. Due to being one of the most recent model-free approaches,  we will compare our method with  ZOPO in Section \ref{sec:experiment}. 
\subsection{Consensus-based solutions}
The integration of belief diffusion to address Dec-POMDPs has been explored in prior studies. The authors in \cite{Consensus1, Similar_1} investigate the applicability of consensus protocols to Dec-POMDP settings, primarily focusing on online planning. However, this approach has not been thoroughly examined within the framework of reinforcement learning. These studies predominantly analyze the algorithm from a consensus protocol standpoint without specific convergence proofs for reinforcement learning scenarios. Furthermore, the methodologies proposed in these works are largely heuristic in nature. \par 
 The authors in \cite{Consensus2} incorporate belief diffusion into some basic reinforcement learning algorithms such as Q-learning. Our approach integrates belief diffusion into more advanced RL technique, specifically into the off-policy multi-agent actor-critic solutions, and provides detailed analysis and conditions  conditions for convergence.

\section{Model Setting}
\label{sec: Preliminary}
Our setting can be modelled through a Decentralized Partial Observable Markov decision process (Dec-POMDP) denoted by the tuple $(\mathcal{K}, \mathcal{A}_{\ell}, \{\mathcal{O}_{\ell}\}_{\ell=1}^{K}, \mathcal{S}, \{r_{\ell}\}_{\ell=1}^{K}, \mathcal{P},\{ \mathcal{L}_{\ell}\}^{K}_{\ell=1})$, where $\mathcal{K}\triangleq \{1,2,...,K\}$ is set of agents, $\{\mathcal{A}_{\ell}\}_{\ell=1}^{K}$ is set of  possible actions for agent $\ell$, $\mathcal{O}$ is a continuous set of observations, and $\mathcal{S}$ is a set of states. Moreover, $\mathcal{P}:\mathcal{S} \times \mathcal{A} \to \mathcal{S}$ is a transition model where the value $\mathcal{P}(s'|s,a)$ denotes the probability of transition to state $s' \in \mathcal{S}$ from state $s\in \mathcal{S}$ after taking joint action $a\triangleq \{a_{\ell}\in \mathcal{A}_{\ell}\}_{\ell=1}^K$. The likelihood function $\mathcal{L}_{\ell}(\xi|s)$ denotes the probability of observing $\xi \in \mathcal{O}_{\ell}$ when the true global state is $s\in \mathcal{S}$. 
Communication among agents is defined by some fixed graph $G$ where nodes represent the agents  and edges represent communicability of agents.   
\par 
Next, we introduce some important metrics for reinforcement learning. Let  $r_{\ell,n}\triangleq r_{\ell}(s_n, a_{\ell,n})$ denote the individual reward  achieved by agent $\ell$ at time $n$ if at global state $s_n \in \mathcal{S}$ it chooses action $a_{\ell,n} \in \mathcal{A}$. We denote the reward upper-bound by $R_{\max}$ and global reward at time $n$ by 
\begin{align}
\bar{r}_n= \frac{1}{K}\sum \limits_{\ell=1}^K r_{\ell, n}
\end{align}
Then, under some policy $\pi $, which drives the decisions by the agents,  the state-action value for state and action pair $(s,a)$ is defined as
\begin{align}
\label{eq:Q}
    Q^{\pi}(s, a)&\triangleq \mathbb{E}\left[\sum_{n=0}^{\infty} \gamma^n \bar{r}_n \mid s_{0}=s, a_0=a\right] 
\end{align}
 where  $\gamma$ is a discount factor. Similarly, a state value for the global state $s$ is defined as 
 \begin{align}
 \label{eq:V}
       V^{\pi}(s)&\triangleq \mathbb{E} \left[\sum_{n=0}^{\infty} \gamma^n \bar{r}_n \mid s_{0}=s\right]
 \end{align}
 Since the proposed algorithm is based on off-policy learning we introduce the notation for the target and behavioral policies. Let  $b_{\ell}:S\times A \to [0,1]$ denote individual policy of agent $\ell $. Individual target policies are approximated by local parameterized functions $\pi_{\ell}(\cdot;\theta_{\ell})$, where $\theta_{\ell} \in \mathbb{R}^{d}$. Policy functions $\pi_{\ell}$ are assumed to be continuously differentiable with respect to $\theta_{\ell}$. We assume the individual target policies to be independent. Therefore, we define the joint  target policy as the product of individual  target policies
 \begin{align}
   \pi=\prod_{\ell=1}^K \pi_{\ell} (\cdot;\theta_{\ell}): A \times S \rightarrow[0,1] 
\end{align}
Similarly, we define the joint behavioral policy as
\begin{equation}
b=\prod_{\ell=1}^K b_{\ell}: A \times S \rightarrow[0,1]
\end{equation}
Off-policy learning relies on corrections, which are implemented using the \textit{importance sampling ratio}, defined as:
\begin{align}
\label{eq:importance}
\rho_{n}\triangleq \frac{\pi_n}{b}=\prod\limits_{\ell=1}^K\limits   \frac{\pi_{
\ell,n}}{b_{\ell}}
\end{align}
As indicated by \eqref{eq:importance}, the importance sampling ratio necessitates joint policies. In the MAOPAC algorithm \cite{SUTTLE20201549}, the importance ratio is estimated in a decentralized manner at each actor-critic update by iteratively diffusing individual importance ratios until consensus:
\begin{align}
    \rho_{\ell, n}\triangleq \frac{\pi_{\ell,n}}{b_{\ell}}
\end{align}
We adopt this assumption in our work as well. In our analysis, we assume that consensus-based estimation is sufficiently accurate to ignore finite-time errors in the estimation of the joint $\rho_n$.

\section{Multi-agent Off-Policy Actor-Critic for  dec-POMDP }
\label{sec:algorithm}
In our study, we extend the multi-agent off-policy actor-critic (MAOPAC) algorithm described in \cite{SUTTLE20201549} by considering scenarios where the global state is not fully observed.  We refer to the proposed extended algorithm as MAOPAC-dec-POMDP. The MAOPAC-dec-POMDP algorithm is based on the repeated alternation of two main learning phases, as demonstrated in Figure \ref{fig:chart}. The first phase involves estimating the global state, while the second phase involves performing traditional MAOPAC with some adjustments. In the following subsections, we discuss these two phases in more detail. The complete listing of our method is presented in Algorithm \ref{Algorithm1}, which will be consistently referenced throughout our explanation. 
\subsection{Multi-agent off-policy actor-critic learning} 
\label{subsection1}
The MAOPAC algrotihm belongs to the family of policy gradient methods designed to concurrently learn the optimal policies (actors)  and corresponding state values (critics). In our study, we assume that each agent has its own estimate for state values, approximated using the following linear function:
\begin{align}
v_{\omega_{k,n}}\approx \mu_{k,n}^T\omega_{k,n}
\end{align}
where $\mu_{k,n}$ is a state feature vector  and $\omega_{k,n}$ is a parameter vector, also referred to as \textit{the critic parameter}, estimated by agent $k$ at time $n$. Target policies are approximated with some functions $\pi(\mu_{k,n};\theta_{k,n})$ parametrized by $\theta_{k,n}$:
\begin{align}
    \pi_{k,n}\approx \pi(\mu_{k,n};\theta_{k,n})
\end{align}
Therefore, the policy parameters $\theta_{k,n}$ will be referred to as \textit{ the actor parameters}. \par 
As given in steps \eqref{alg:omega1}-\eqref{alg:theta} of Algorithm \ref{Algorithm1}, the updates of $\theta_{k,n}$ and $\omega_{k,n}$ are similar to those of  traditional actor-critic algorithms with the exception of several correction factors: $\rho_{k,n}$, $e_{k,n}$ and $M^{\theta}_{k,n}$. The importance sampling ratio $\rho_{k,n}$ is essential for correcting the off-policy nature of learning.   Off-policy learning allows for a disparity between the agent's behavioral policy and the target policy, which offers practical advantages over on-policy approaches. Specifically,  when the behavioral policy  is related to the estimated target policy, as in on-policy learning, it may lead to insufficient exploration——a widely recognized challenge known as the exploration-exploitation trade-off. Nevertheless, off-policy learning causes instabilities.  The correction factor $\rho_{k,n}$ alone can address these instabilities when dealing with exact state and policy values. When employing function approximations, as in our study, additional corrections are required.  Therefore, a novel variant of the actor-critic paradigm, termed \textit{emphatic temporal difference learning} (ETD), has been proposed in \cite{ETD1, ETD2, SUTTLE20201549}. This variant ensures stability for linear approximation models and off-policy learning scenarios.  The correction variables $e_{k,n}$ and $M^{\theta}_{k,n}$ are  motivated by ETD. The steps \eqref{alg:F}-\eqref{alg:M_theta} involve intermediate calculations necessary for computing $e_{k,n}$ and $M^{\theta}_{k,n}$. 
 Parameters $\lambda, \zeta$ and  $\gamma $ are fixed and fall within the range $(0,1)$. 
In traditional MAOPAC, $\mu_{k,n}$ can be any vector with elements that fall within the range $[0,1]$. Therefore,  for our problem, we can treat 
$\mu_{k,n}$ as a belief vector of size $S$, where the $i$-th  element corresponds to the probability that the current state is 
$i$. In the case of full-state observation, as with MAOPAC, all elements of $\mu_{k,n}$ are zero except for $1$ at the location of the actual state.
\subsection{State estimation}
At each iteration of the critic and actor updates, agents perform state estimation using using the social learning strategy described by \eqref{belief_update}-\eqref{belief_update2}. Social learning is a form of group learning that identifies the most suitable hypothesis state from a set $\mathcal{S}$, best explaining the given observations $\xi_{k,i} \in \mathcal{O}_k$. In this work, we use the Adapt-Then-Combine social learning technique for state estimation \cite{Social_learning1,Social_learning2}. 
\begin{figure*}[h]
  \centerline{\includegraphics[width=19cm]{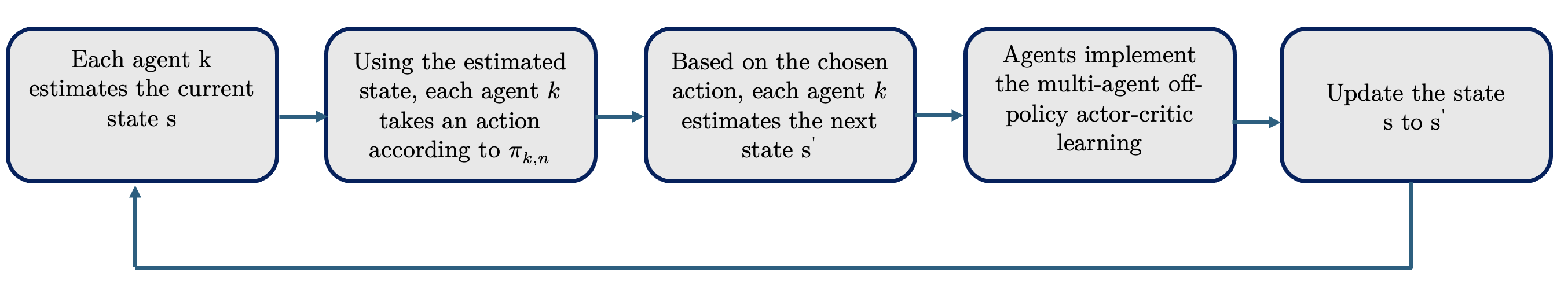}}
\caption{A block diagram illustrating the primary steps of the proposed algorithm.}
\label{fig:chart}
\end{figure*}
 At each iteration of MAOPAC learning, we implement an internal loop for learning the global state. At iteration $n$  of actor-critic updates, all agents receive a set of individual observations $\{\xi_{k,t}^n\}_{t=0}^T$, which are dependent on the current unknown global state of nature,  i.e., $s_n$. Using their individual likelihood functions $L_{k}(\xi_{k,n,t}|s)$, the $s$-th elements of the individual  belief vectors $\mu_{k,n}$  are updated as follows: 
 \begin{enumerate}
     \item Repeat \eqref{belief_update} and \eqref{belief_update2} for $t=0,1,...,T$, $\forall s\in \mathcal{S}$ and $\forall k \in \mathcal{K}$:
 \begin{align}
 \label{belief_update}
 &\text{Adapt: } \nonumber\\
&\psi_{k, t}(s)  =\frac{L_k\left({\xi}_{k,t}^n \mid s\right) {\widetilde{\mu}}_{k, t-1}(s)}{\sum_{s' \in \mathcal{S}} L_k\left(\xi_{k,t}^n \mid s^{\prime}\right) \widetilde{\mu}_{k, t-1}\left(s^{\prime}\right)} \\
 \label{belief_update2}
  & \text{Combine: } \nonumber\\
&\widetilde{\mu}_{k, t}(s) =\frac{\prod_{\ell \in \mathcal{N}_k}\left[{\psi}_{\ell, t}(s)\right]^{c_{\ell k}}}{\sum_{s^{\prime} \in \Theta} \prod_{\ell \in \mathcal{N}_k}\left[{\psi}_{\ell, t}(s)\right]^{c_{\ell k}}} 
\end{align}
\item  Assign:
 \begin{align}
 \label{eq:assign1}
     \mu_{k,n}=\widetilde{\mu}_{k,T}
 \end{align}
 \end{enumerate}
 where  $c_{\ell, k}$ are the entries of a combination matrix $C$ satisfying Assumption \ref{assumption:combination}. 
Actor-critic learning requires knowledge of the current and next states. Therefore, we also introduce the belief vector $\eta_{k,n}$  the $i$-th  element corresponds to the probability that the next state is 
$i$. Similar to updating  $\mu_{k,n}$, the next state belief vectors $\eta_{k,n}$ are updated as follows:     
\begin{enumerate}
     \item  Repeat \eqref{belief_update3} and \eqref{belief_update4} for $t=0,1,...,T$, $\forall s\in \mathcal{S}$ and $\forall k \in \mathcal{K}$ :\\
     Adapt: 
 \begin{align}
 \label{belief_update3}
\psi_{k, t}(s) & =\frac{L_k\left({\xi}_{k,t}^{n+1} \mid s\right) {\widetilde{\mu}}_{k, t-1}(s)}{\sum\limits_{s' \in \mathcal{S}} L_k\left(\xi_{k,t}^{n+1} \mid s^{\prime}\right) \widetilde{\eta}_{k, t-1}\left(s^{\prime}\right)} 
\end{align}
Combine:
\begin{align} 
   \label{belief_update4}
\widetilde{\eta}_{k, t}(s) & =\frac{\prod_{\ell \in \mathcal{N}_k}\left[{\psi}_{\ell, t}(s)\right]^{c_{\ell k}}}{\sum\limits_{s^{\prime} \in \Theta} \prod_{\ell \in \mathcal{N}_k}\left[{\psi}_{\ell, t}(s)\right]^{c_{\ell k}}} 
\end{align}
\item Assign:
 \begin{align}
 \label{eq:assign2}
     \eta_{k,n}=\widetilde{\eta}_{k,T}
 \end{align} 
 \end{enumerate}

Actually, agents are required to estimate both $\mu_{k,n}$ and $\eta_{k,n}$
  only during the initial iteration, i.e., for $n=0$. In subsequent iterations, the next state from the previous iteration is utilized as the current state in the new iteration, i.e.,
\begin{align}
    \mu_{k,n}=\eta_{k,n-1}
\end{align}

\begin{assumption}[\textbf{Combination matrix}]
\label{assumption:combination}
  The combination matrix $C$ assigns non-negative weights to neighboring agents and is  assumed to be doubly-stochastic, i.e., the entries on its  columns and rows sums to $1$: 
\begin{align}
\sum\limits_{\ell =1}^K c_{\ell,k}&=\sum\limits_{\ell \in \mathcal{N}_{k}}c_{\ell,k}=1 ; \\ \sum\limits_{\ell =1}^K c_{k,\ell}&=\sum\limits_{\ell \in \mathcal{N}_{k}}c_{k,\ell}=1
\end{align}  
where the notation ${\cal N}_k$ denotes the neighbors of agent $k$. \par \QEDA
\end{assumption}

As demonstrated in \cite{SL2}, under Assumptions \ref{assumption:combination}-\ref{assumption:likelihood}, repeated application of the updates in \eqref{belief_update} and \eqref{belief_update2} allows agents to almost surely learn the true current state $s_n$: 
\begin{align}
\mu_{k, n}\left(s_n\right) \stackrel{\text { a.s. }}{\longrightarrow} 1 \text {. }    
\end{align}
The convergence rate of the algorithm is discussed in \cite{SL_1}. Generally, the belief vectors converge  at an exponential rate, which depends on the second-largest-magnitude  eigenvalue of $C$ \cite{SL3}. Therefore, in Section \ref{sec:Theory}, we analyze the maximum allowable state estimation error to ensure that, by the end of the MARL run, the estimation of the policy parameter $\theta_{k,n}$ is $\varepsilon$-optimal. Using the maximum allowed state estimation error, we can adjust the run time of the inner loop, i.e., the number of iterations in the state estimation loop.
\begin{assumption}[\textbf{Strong connectivity}]
\label{assumption:graph}
    The underlying graph topology is assumed to be strongly connected, i.e, there exists a path with positive combination weights between any two agents in the network and, moreover, at least one agent $k_o$ has a nonzero self loop, $c_{k_o k_o}>0$. \par \QEDA
\end{assumption}
\begin{assumption}[\textbf{Likelihood function}]
\label{assumption:likelihood}
   For all agents $k\in \mathcal{K}$ and all states $s\in \mathcal{S}$, KL-divergence between the true model $f_k(\xi_{k})$ and the likelihood function  $L_k(\xi_{k}|s)$ is finite:
    \begin{align}
        \operatorname{D}_k\left(f\mid\mid L_k\right)\triangleq \mathbb{E}_{f_{k}}\log \frac{f_k(\xi)}{L_{k}(\xi\mid s)} <\infty
    \end{align} \QEDA
\end{assumption}
\begin{algorithm}
\SetAlgoLined
{\small
Initialize parameters: $\lambda\in (0,1)$, $\zeta\in (0,1)$, $\gamma\in (0,1)$, $\omega_{k,0}(s,a)$, $\rho_{k,0}$, $\widetilde{\mu}_{k,0}=\frac{1}{S}$,$\eta_{k,0}=\frac{1}{S}$ , $F_{k,0}=0$\;

\For{n=0,1,2....}{
\If{n=0}{
 Each agent $k$ receives observations $\{\boldsymbol{\xi}_{k, t}^{0}\}_{t=0}^{T}$ \;
Each agent $k$ estimates $\mu_{k,0}$ using \eqref{belief_update}-\eqref{eq:assign1}\; 

}
Each agent $k$ takes action $a_{k,n} \sim   b_{k} \left(a\| \mu_{k,n}\right)$\;
Each agent $k$ receives reward $r_{k,n}$ \; 
Each agent $k$ receives observations $\{\xi_{k,t}^{n+1}\}_{t=0}^T$\; 
Each agent $k$ estimates $\eta_{k,n}$ using \eqref{belief_update3}-\eqref{eq:assign2}\;

Each agent $k$ estimates $\rho_{k,n}$ using \eqref{eq:p_kn}-\eqref{eq:rho_kn}\;
\For{each agent $k$}{
{
\begin{align}
& F_{k,n}=1+\gamma \rho_{k,n-1} F_{k,n-1} \label{alg:F} \\
& M_{k,n}=\lambda+(1-\lambda) F_{k,n}  \label{alg:M}\\
& e_{k,n}=\gamma \lambda e_{k,n-1}+M_{k,n} \mu_{k,n} \label{alg:e}\\
& M_{k,n}^\theta=1+\zeta \gamma \rho_{k,n-1} F_{k,n-1} \label{alg:M_theta}\\
& \delta_{k,n}=r_{k,n}+\gamma {\omega_{k,n}^T}\eta_{k,n}-{\omega_{k,n}^T}\mu_{k,n} \\
&\Psi_{k,n}=\frac{\nabla_{\theta}  \pi\left(a_{k,n} \mid \mu_{k,n}\right)}{\pi\left(a_{k,n} \mid \mu_{k,n}\right)}\\
&\widetilde{\omega}_{k,n}=\omega_{k,n}+\beta_{n} \rho_{k,n} \delta_{k,n} e_{k,n} \label{alg:omega1}\\
&\theta_{k,n+1}=\theta_{k,n}
+\beta_{n} \rho_{k,n} M_{k,n}^\theta  \delta_{k,n} \Psi_{k,n} \label{alg:theta}
\end{align}}
}
\For{each agent $k$ }{
\begin{equation}
\omega_{k,n+1}=\sum_{\ell \in \mathcal{N}_k} c_{\ell, k} \widetilde{\omega}_{\ell, n} \label{alg:omega2}
\end{equation}}
Assign: $\mu_{k,n}=\eta_{k,n}$\; 
Reset:  $\widetilde{\eta}_{k,0}(s)=\frac{1}{S}$  \;
}
\caption{MAOPAC-dec-POMDP}
 \label{Algorithm1}
 }
\end{algorithm}
\subsection{Importance sampling ratio} The importance sampling ratio is crucial for correcting off-policy algorithms. Similar to state estimation, the importance sampling ratio also needs to be learned in a distributed manner because each agent, due to privacy concerns, is only aware of its own behavioral and estimated target policies. Specifically, we introduce the variable $p_{k,n}$ to denote log importance ratios:
\begin{equation}
\label{eq:p_kn}
    p_{k,n}\triangleq\log \frac{\pi_{k,n} \left(a| \mu_{k,n}\right)}{b_{k} \left(a| \mu_{k,n}\right)}
\end{equation}
Next, for $t=0, 1,...T_{\rho}$, agents diffuse  $p_{k,n}$ as follows:
  \begin{equation}
\begin{gathered}
\label{eq:tildep_kn}
\widetilde{p}_{k, t} \propto \sum_{\ell \in \mathcal{N}_k} c_{\ell k}\widetilde{p}_{\ell, t}
\end{gathered}
\end{equation}
where $\widetilde{p}_{k,0}=p_{k,n}$. 
After $T_{\rho}-$iterations, the improved   importance  sampling ratios $\rho_{k,n}$ are retrieved as:
\begin{align}
\label{eq:rho_kn}
 \rho_{k,n}=\exp \left(N_k \widetilde{p}_{k,T_{\rho}}\right) 
\end{align}
For simplicity of analysis, at this stage of the study, we assume that $T_\rho$ is large enough to approximate the individual estimate $\rho_{k,n}$ with the global importance sampling ratio, i.e., 
\begin{align}
\label{eq:rho_main}
\rho_{k,n} \approx \prod\limits_{k' \in K} \frac{ \pi(a_{k',n} | \mu_{k',n}; \theta_{k',n})}{b_{k'}}
\end{align}
To align with the original algorithm \cite{SUTTLE20201549}, we introduce Assumption \ref{assumption:policies} related to behavior policies, which leads to the following constraint:
\begin{align}
\label{eq:rho_constraint}
\rho_n \leq \frac{1}{b_{ {\epsilon}}}
\end{align}
\begin{assumption}[\textbf{Behavioral policies}]
\label{assumption:policies}
The joint behavioral policy is time-invariant and nonzero, i.e., $b_{n} \geq  b_{\epsilon} > 0$. Additionally, for analysis, we require that $ b_{\epsilon} > \gamma$. \par \QEDA
\end{assumption}
The discount factor $\gamma$ is commonly chosen in the range of $0.8-0.9$. Therefore, Assumption \ref{assumption:policies} requires that the probability of the chosen action at time $n$ considerably dominates over other possible actions. A good example is a deterministic policy where one action whose probability (namely $1$) considerably dominates over the probabilities of other actions (zeros). The learning rate at time $n$, denoted by $\beta_{n}$, follows Assumption \ref{assumption:step}. 
 \begin{assumption}[\textbf{Learning rate}]
\label{assumption:step}
The learning step size $\beta_{n}$ satisfies
\begin{align}
    \sum \limits_{n=0}^\infty \beta_{n}=\infty, \quad \quad \sum \limits_{n=0}^\infty \beta_{n}^2<\infty
\end{align}
\QEDA
\end{assumption}



\section{Theoretical guarantees}
\label{sec:Theory}
For analysis we compare the performance of MAOPAC when the global state is fully observed against when it is only partially observed. In essence, we compare the original algorithm MAOPAC and the proposed MAOPAC-dec-POMDP. Let $\widehat{\mu}_{k,n}$, $\widehat{\rho}_{k,n}$, $\widehat{\omega}_{k,n}$, $\widehat{\theta}_{k,n}$, $\widehat{F}_{k,n}$, $\widehat{M}_{k,n}$, $\widehat{e}_{k,n}$, $\widehat{\delta}_{k,n}$, and $\widehat{M}^{\theta}_{k,n}$ denote the full-observation counterparts of $\mu_{k,n}$, $\rho_{k,n}$, $\omega_{k,n}$, ${\theta}_{k,n}$, ${F}_{k,n}$, ${M}_{k,n}$, ${e}_{k,n}$, ${\delta}_{k,n}$, and ${M}^{\theta}_{k,n}$, respectively. These variables undergo the same update processes as their counterparts, with the distinction that they are privileged with  knowledge of the true global state. For comparison between these two sets we introduce the error variables
\begin{align}
    &\Delta \mu_{k,n}\triangleq \widehat{\mu}_{k,n}-\mu_{k,n}, \ \Delta \rho_{k,n}\triangleq \widehat{\rho}_{k,n}-\rho_{k,n} ,\\ &\Delta\omega_{k,n} \triangleq \widehat{\omega}_{k,n}-\omega_{k,n}, \ \Delta \theta_{k,n}\triangleq \widehat{\theta}_{k,n}-\theta_{k,n},\\
     &\Delta F_{k,n}\triangleq \widehat{F}_{k,n}-F_{k,n},  \ \Delta \widehat{\delta}_{k,n}\triangleq \widehat{\delta}_{k,n}-\delta_{k,n}
\\ &\Delta e_{k,n}\triangleq \widehat{e}_{k,n}-e_{k,n}, \   \Delta M_{k,n}\triangleq \widehat{M}_{k,n}-M_{k,n}\\
&\Delta M_{k,n}^{\theta}\triangleq \widehat{M}_{k,n}^{\theta}-M_{k,n}^{\theta}
\end{align}
Additionally, in the analysis of the proposed method, we must employ the following upper bounds:
\begin{align}
    &|M^{\theta}_{k,n}|\leq  \frac{1-(1-\zeta)\gamma/ b_{\epsilon}}{1-\gamma/ b_{\epsilon}} \triangleq  B_M^{\theta}\\
    &\|\omega_{k,n}\|\leq  \sum\limits_{i=0}^{n-1}  \frac{\beta_{i}\Omega^{n-i}R_{\max}  B_e \|\omega_{0, \max}\|}{ b_{\epsilon}} \triangleq B_n^{\omega}\\
    &|\delta_{k,n}|\leq R_{\max}+(1+\gamma) B^\omega_{n}\triangleq B_{n}^{\delta}\\
    &\|e_{k,n}\|\leq \frac{1}{1-\lambda\gamma}\left(\lambda+\frac{1-\lambda}{1-\gamma/ b_{\epsilon}}\right)\triangleq  B_e
\end{align}
These bounds are derived in the appendix (Lemma \ref{lemma:M_bound} - \ref{lemma:omega_bound2}).\par  
The primary objective of the MAOPAC algorithm is to find the policy parameter $\widetilde{\theta}_{k,n}$ that maximizes the average cumulative reward across the network. The convergence of MAOPAC to the optimal solution has been established in \cite{SUTTLE20201549}. In our analysis, we determine conditions under which the actor parameter under partial observability,   $\theta_{k,n}$, can get $\varepsilon$-close to the optimal result of MAOPAC. This finding is formally stated in Theorem \ref{theorem2}, whose proof appears in the appendix.
\begin{theorem}[\textbf{$\boldsymbol{\epsilon}$-optimality}]
\label{theorem2}
Let $\|\cdot\|$ denote the Euclidean norm ($2-$norm) of a vector. Then,
    under Assumptions \ref{assumption:combination}-\ref{assumption:step}, for all agents $k$, $\Delta \theta_{k,n}$  is $\varepsilon$-bounded at time $n$ if   $\forall j \leq n$: 
\begin{align}
\label{eq:theorem21}
    \|\Delta \mu_{\ell ,j})\|\leq \min (\widetilde{B}_1,\widetilde{B}_2 ) \text{  and }
\end{align}
 \begin{align}
 \label{eq:theorem22}
     \|\Delta \rho_{\ell,j}\|  \leq \min  (\widetilde{D}_1, \widetilde{D}_2
     ,\widetilde{D}_3)
 \end{align}
where
\begin{align} 
    \widetilde{B}_1 &\triangleq  \frac{\varepsilon  b_{\epsilon}(1+\gamma)^{-1}B_e^{-1} }{\Phi_n \beta_j B^{\omega}_{j} \Omega^{n-j} }\label{eq:B_1_main}, \\      \widetilde{B}_2&\triangleq \frac{\varepsilon\beta_0^{-1} I_1^{-1} }{2\Phi_n |M_{k,j}|B_{n}^{\delta}\Omega^{n-I_1}(\gamma \lambda)^{I_1-j}}\label{eq:B_2_main}\\
\widetilde{D}_1 &\triangleq \frac{\varepsilon  b_{\epsilon}B_e^{-1}}{\Phi_n \beta_j B_j^{\delta}\Omega^{n-j}} \label{eq:D_1_main}  \\
 \widetilde{D}_2 &\triangleq \left(\frac{ b_{\epsilon}}{\gamma}\right)^{I_2-j}\frac{\varepsilon I_2^{-2} (1-\lambda)^{-1}\beta_0^{-1}}{2\Phi_nF_{k,j} B_{ n }^{\delta}\Omega^{n-I_2}} \label{eq:D_2_main}\\
 \widetilde{D}_3&\triangleq\left(\frac{ b_{\epsilon}}{\gamma}\right)^{I_3-j} \frac{3 \varepsilon b_{\epsilon}I_3^{-2}\zeta^{-1}}{8\beta_n  \pi^2   B^{\delta}_{n} F_{k,j}} \label{eq:D_3_main} \\
  I_1&\triangleq \left[\ln\frac{\Omega}{\gamma \lambda}\right]^{-1}, \quad
   I_2\triangleq 2 \left[\ln \frac{ b_{\epsilon}\Omega}{\gamma}\right]^{-1}, \\ 
   I_3&\triangleq2 \left[\ln\frac{ b_{\epsilon}}{\gamma}\right]^{-1}, \quad \Omega \triangleq 1+\frac{\beta_0(1+\gamma)B_e}{b_{\epsilon}}\\ 
   \Phi_n &\triangleq 4\beta_{0}(1+\gamma)\pi^2 B^{\theta}_{M} n^3,
\end{align} \QEDA
\end{theorem}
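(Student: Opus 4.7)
My approach is to perform an induction on the iteration index and track how the two primary error sources, $\Delta\mu_{k,n}$ and $\Delta\rho_{k,n}$, propagate through each intermediate quantity ($\Delta F_{k,n}, \Delta M_{k,n}, \Delta M^{\theta}_{k,n}, \Delta e_{k,n}, \Delta \omega_{k,n}, \Delta\delta_{k,n}$) and finally into $\Delta\theta_{k,n+1}$. Subtracting the partial-observation update from the full-observation update for each line of Algorithm~\ref{Algorithm1} yields a linear recursion in the error variables plus a driving term proportional to either $\|\Delta\mu\|$ or $|\Delta\rho|$. The constants $B_{e}, B_{n}^{\omega}, B_{n}^{\delta}, B_{M}^{\theta}$ from the preamble are what let me replace the partial-observation quantities by their uniform bounds and convert each recursion into a closed-form bound via a standard Gr\"onwall/geometric-sum argument.

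\textbf{Order of steps.} First I would handle the scalar chain: $\Delta F_{k,n}=\gamma(\Delta\rho_{k,n-1}F_{k,n-1}+\rho_{k,n-1}\Delta F_{k,n-1})$ unrolls to a sum over $j\le n$ with geometric factor $(\gamma/b_{\epsilon})^{n-j}$ multiplying $\|\Delta\rho_{k,j}\|$ (using Assumption~\ref{assumption:policies} to bound $\rho_{k,n-1}\le 1/b_{\epsilon}$); from this $\Delta M_{k,n}$ and $\Delta M^{\theta}_{k,n}$ follow immediately by their definitions in \eqref{alg:M} and \eqref{alg:M_theta}. Next I would bound $\Delta e_{k,n}=\gamma\lambda\Delta e_{k,n-1}+\Delta M_{k,n}\,\mu_{k,n}+\widehat{M}_{k,n}\,\Delta\mu_{k,n}$, unrolling the recursion to expose the $(\gamma\lambda)^{n-j}$ factor that motivates the splitting index $I_1$ in \eqref{eq:B_2_main}. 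I would then treat the critic error $\Delta\omega_{k,n+1}$, which is the most delicate because $\omega$ feeds back into $\delta$; using \eqref{alg:omega1}-\eqref{alg:omega2} and double stochasticity of $C$ I obtain a recursion of the form $\|\Delta\omega_{k,n+1}\|\le (1+\beta_{n}(1+\gamma)B_{e}/b_{\epsilon})\|\Delta\omega_{k,n}\|+(\text{driving term in }\Delta\rho,\Delta\mu)$. The coefficient is precisely $\Omega$, which, iterated, gives the $\Omega^{n-j}$ factors appearing in every bound $\widetilde{B}_i,\widetilde{D}_i$. With $\Delta\omega_{k,n}$ in hand, $\Delta\delta_{k,n}$ bounds follow from $\delta_{k,n}=r_{k,n}+\gamma\omega_{k,n}^{\top}\eta_{k,n}-\omega_{k,n}^{\top}\mu_{k,n}$. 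Finally I plug everything into $\Delta\theta_{k,n+1}$ obtained from \eqref{alg:theta}, using Lipschitz smoothness of $\log\pi$ (so that $\|\Psi_{k,n}\|\le\pi$ and its analog of the error are controlled --- giving the $\pi^{2}$ factor inside $\Phi_n$), and sum the resulting bound over $j=0,\dots,n$.

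\textbf{Where the splitting indices come from.} The bounds $\widetilde{B}_{2},\widetilde{D}_{2},\widetilde{D}_{3}$ each contain a factor of the form $(\gamma\lambda/\Omega)^{I}$ or $(\gamma/(b_{\epsilon}\Omega))^{I}$. These arise because the geometric growth $\Omega^{n-j}$ (from the critic recursion) has to be balanced against a geometric decay $(\gamma\lambda)^{n-j}$ or $(\gamma/b_{\epsilon})^{n-j}$ (from the eligibility trace / the $F$-chain). The trick is to split the summation index at $I_{i}$ so that for $j<n-I_{i}$ the decay dominates and for $j\ge n-I_{i}$ the growth dominates; choosing $I_{i}=[\ln(\Omega/\gamma\lambda)]^{-1}$ or the analogous quantity makes both halves contribute at most $\varepsilon/2$ when combined with the hypothesized bounds \eqref{eq:theorem21}-\eqref{eq:theorem22}. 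Taking the minimum over $\widetilde{B}_{1},\widetilde{B}_{2}$ (respectively $\widetilde{D}_{1},\widetilde{D}_{2},\widetilde{D}_{3}$) guarantees simultaneously that the contributions of $\Delta\mu$ and $\Delta\rho$ through each of the three derived channels (critic-direct, eligibility trace, and $M^{\theta}$) are controlled.

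\textbf{Main obstacle.} The principal difficulty is the feedback loop between $\Delta\omega$ and $\Delta\delta$: the critic error enters the TD error, which in turn drives the next critic update. Getting the sharp multiplicative constant $\Omega=1+\beta_{0}(1+\gamma)B_{e}/b_{\epsilon}$ (rather than a loose bound that would blow up) requires careful bookkeeping of which quantities are replaced by their uniform upper bounds at which step. A secondary, purely technical obstacle is that $B_{n}^{\omega}$ and $B_{n}^{\delta}$ themselves depend on $n$, so the final bound on $\|\Delta\theta_{k,n+1}\|$ collects a factor of $n^{3}$ (one power from summing over $j\le n$ and two more from the $\beta_{0}$-weighted sums hidden inside $B_{n}^{\omega}$), which is exactly the $n^{3}$ appearing in $\Phi_{n}$. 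Once all these factors are tracked, the inequalities \eqref{eq:theorem21}-\eqref{eq:theorem22} immediately yield $\|\Delta\theta_{k,n+1}\|\le\varepsilon$, completing the induction.
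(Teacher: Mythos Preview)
Your overall skeleton is correct and matches the paper: unroll each recursion, replace the partial-observation quantities by their uniform bounds $B_e, B_M^\theta, B_n^\omega, B_n^\delta$, and chase the driving errors $\Delta\mu,\Delta\rho$ through the channels $\Delta F\to\Delta M,\Delta M^\theta$, $\Delta e$, $\Delta\omega\to\Delta\delta$, and finally into $\Delta\theta$. The identification of $\Omega$ as the per-step growth factor in the critic recursion is exactly right.

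However, two of your explanations are off and would derail the computation if you followed them literally. First, the $\pi^{2}$ in $\Phi_n$ has nothing to do with a bound on $\Psi_{k,n}$; in fact the paper shows $\|\Psi_{k,i}\|\le 1$. The $\pi^{2}$ comes from the Basel series $\sum_{i\ge 1}1/i^{2}=\pi^{2}/6$: the paper allocates the total budget $\varepsilon$ across iterations by requiring that the contribution at step $i$ be $O(\varepsilon/i^{2})$, so that the sum telescopes to $\varepsilon$. This is also the origin of the $n^{3}$ in $\Phi_n$ (one factor of $i$ from the number of terms in the inner sum, one $i^{2}$ from the Basel allocation), not the $B_n^\omega$ mechanism you describe.

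Second, the indices $I_1,I_2,I_3$ are not splitting points of a summation. They are the \emph{argmax} of the functions $j\mapsto j(\gamma\lambda/\Omega)^{j}$, $j\mapsto j^{2}(\gamma/(b_\epsilon\Omega))^{j}$, and $j\mapsto j^{2}(\gamma/b_\epsilon)^{j}$ respectively. The conditions in the theorem must hold for every $j\le n$, and each intermediate bound (e.g., the one coming from Lemma~\ref{lemma:e2}) depends on an extra running index; minimizing the allowed bound over that index is equivalent to maximizing the denominator, and the maximizers are precisely $I_1,I_2,I_3$. If you try to implement your ``split at $I_i$'' idea you will not recover the stated constants, because there is no regime change at those indices---the ratio $(\gamma\lambda/\Omega)^{n-j}$ is monotone in $j$.
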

 \begin{figure*}[h]
\begin{minipage}[b]{0.5\linewidth}
  \centering
  \centerline{\includegraphics[width=7.5cm]{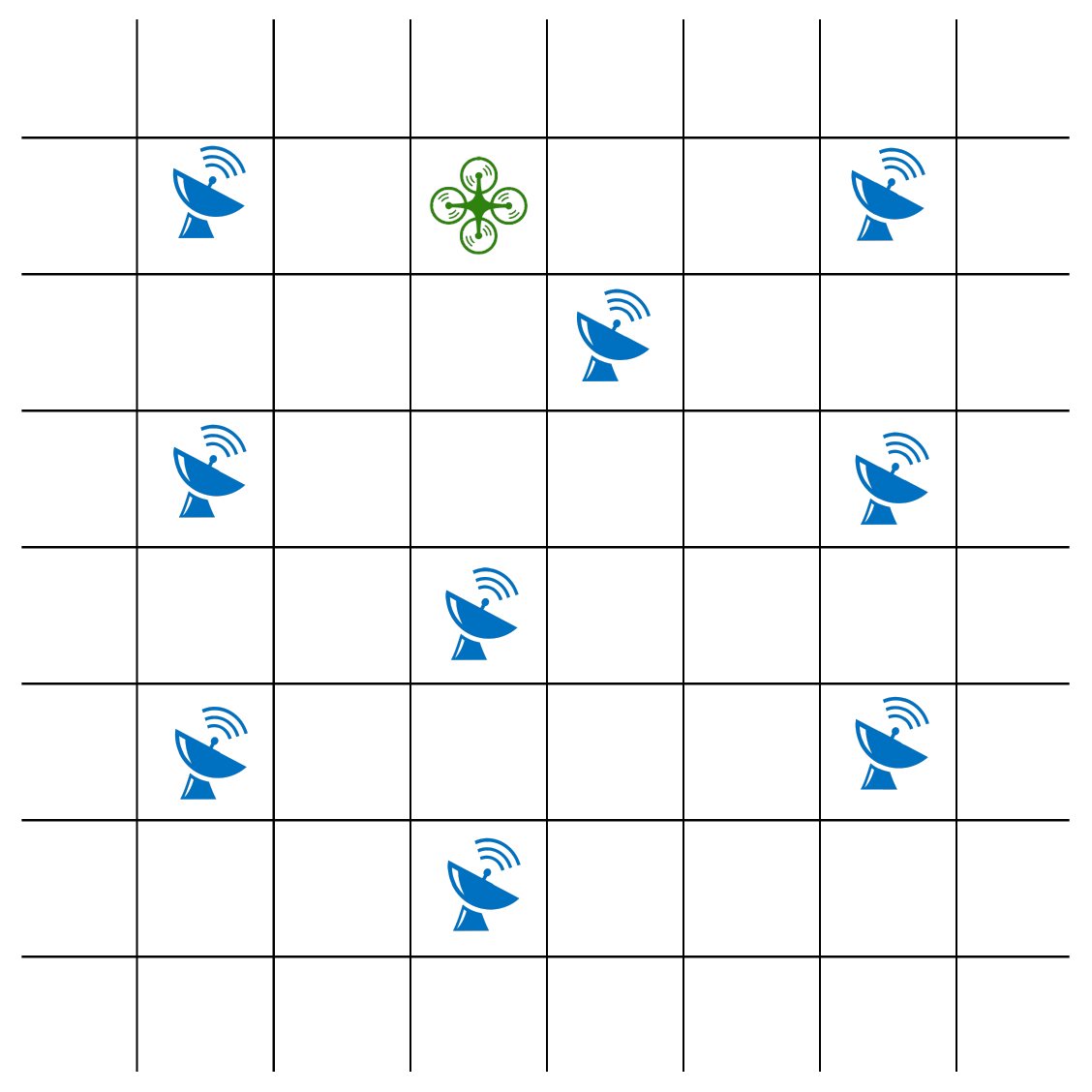}}
  \centerline{(a) Initial state  }\medskip
\end{minipage}
\begin{minipage}[b]{0.5\linewidth}
  \centering
  \centerline{\includegraphics[width=7.5cm]{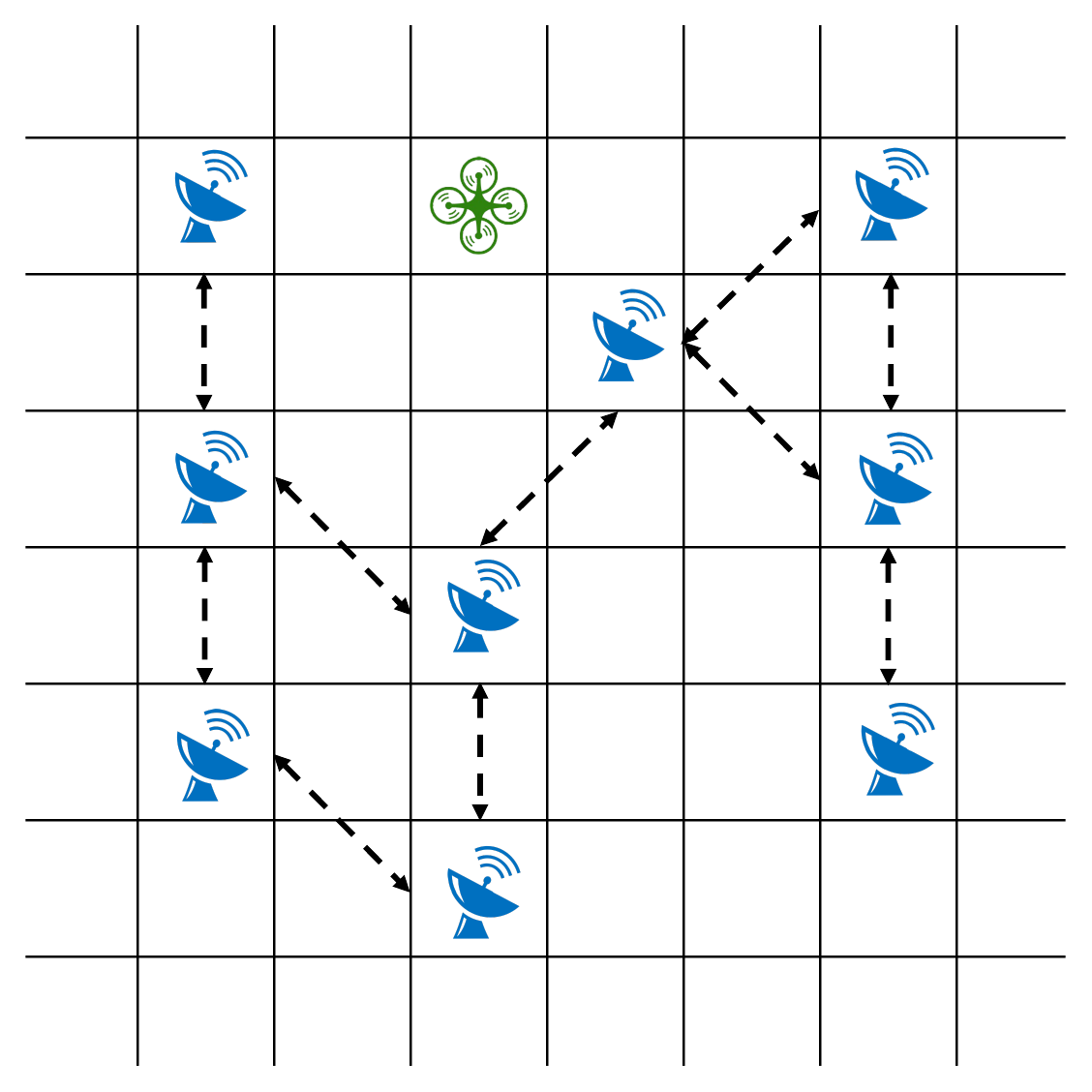}}
\centerline{(b) Parameter exchanges  }\medskip
\end{minipage}
\begin{minipage}[b]{0.5\linewidth}
  \centering
  \centerline{\includegraphics[width=7.5cm]{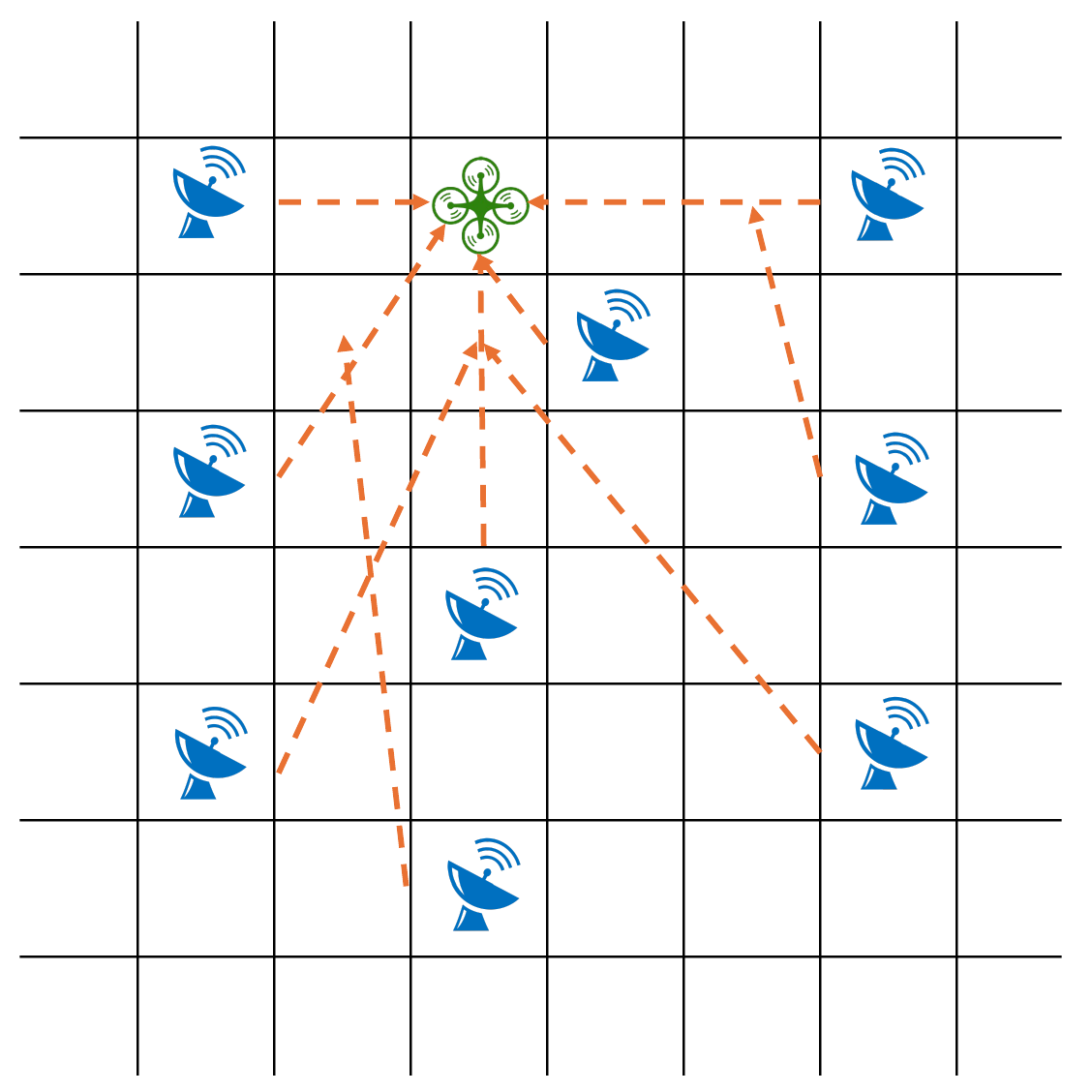}}
  \centerline{(c) Action selection }\medskip
  
\end{minipage}
\begin{minipage}[b]{0.5\linewidth}
  \centering
  \centerline{\includegraphics[width=7.5cm]{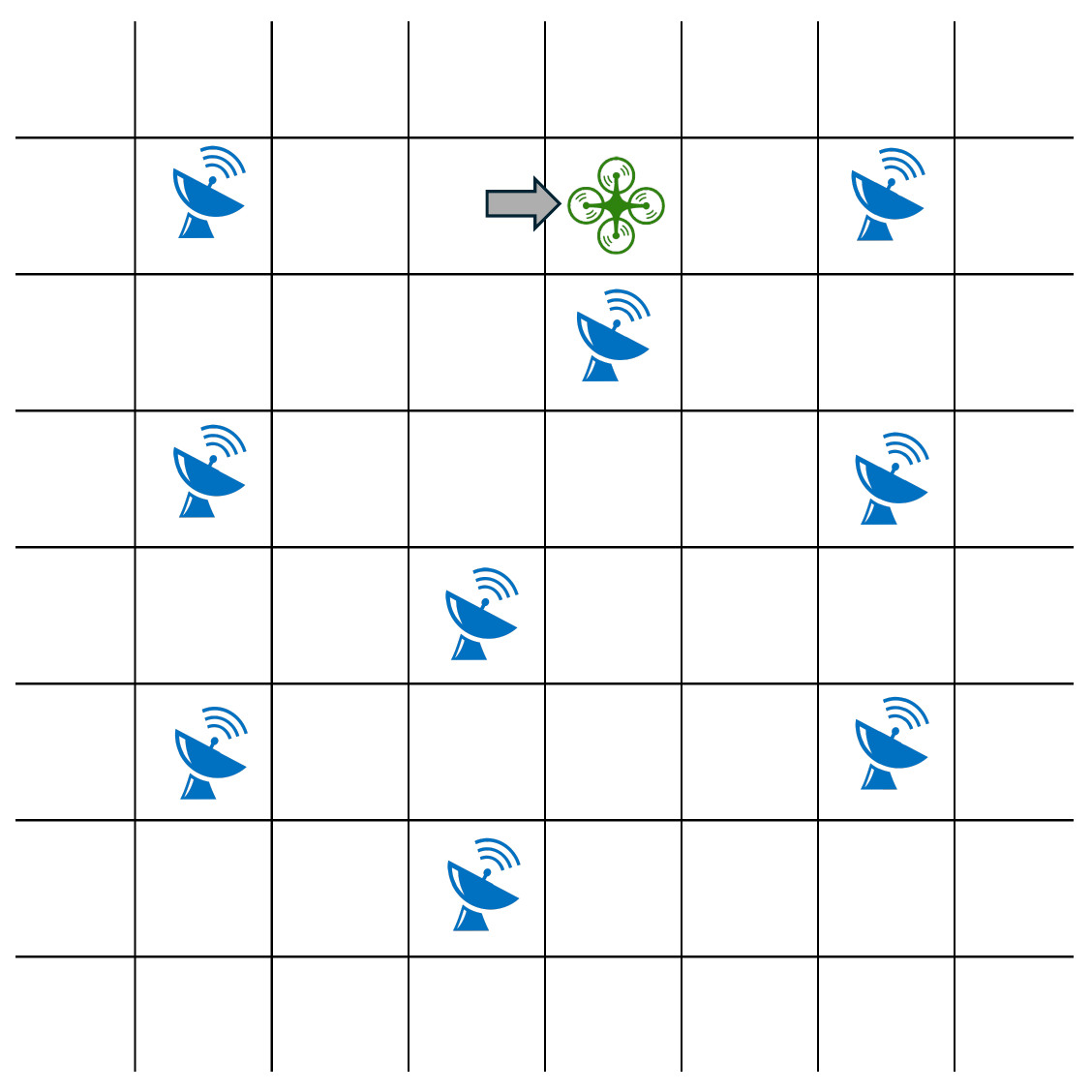}}
  \centerline{(d) State transition }\medskip
\end{minipage}
\caption{Illustration of the agents/target framework: (a) depicts the initial state, corresponding to the position of a target (green) and the fixed positions of agents (blue); (b) shows a phase where agents exchange parameters according to the communication graph: black arrows demonstrate communication links; (c) illustrates a phase where agents, based on their individual policies, choose an action, i.e., select the possible location of the target (cells indicated by the orange arrows);  (d) demonstrates the transition of the target to another state (cell) as a result of the agents' actions.}
\label{fig:experiment}
\end{figure*}
Note that over infinite time, $n \to \infty$, the bounds in \eqref{eq:theorem21} and \eqref{eq:theorem22} converge to zero, which is a reasonable outcome. State estimation occurs at every iteration of the actor-critic updates. As a result, each new state estimation error $\Delta\mu_{k,n}$ contributes to the overall error of the actor parameter $\Delta \theta_{k,n}$. Therefore, the convergence of the actor error $\Delta \theta_{k,n}$ necessitates the convergence of $\Delta\mu_{k,n}$ to zero as $n$ approaches infinity.
Next, note that the bounds in \eqref{eq:theorem21} and \eqref{eq:theorem22} can be computed in real time, highlighting the practical importance of these bounds. Specifically, an agent undergoing an actor-critic update at time $j$ needs to determine the maximum allowable state estimation error and use it to adjust the runtime of the inner state-estimation loop. For this purpose, they can refer to the bounds in \eqref{eq:theorem21} and \eqref{eq:theorem22}, which require only the current time $j$ variables and constants, without the need to retain the history of previous updates.\par
In addition, we derive Corollary \ref{corollary1} which extends  Theorem \ref{theorem2} for a specific target policy model. We assume the target  and behavioral policies follow Boltzmann distributions parametrized by vectors $\{\theta_{c,n}\}_{\forall c\in \mathcal{A}}$ and $\{\bar{\theta}_{c,n}\}_{\forall c\in \mathcal{A}}$, respectively:  
\begin{align}
\label{eq:Boltzman}
\pi_{k,n}(a\|\mu_{k,n})&=\frac{\exp[\mu_{k,n}^T \theta_{a,n}]}{\sum\limits_{c \in \mathcal{A}}\exp[\mu_{k,n}^T \theta_{c,n}]}  \\ \quad b_{k,n}(a\|\mu_{k,n})&=\frac{\exp[\mu_{k,n}^T \bar{\theta}_{a,n}]}{\sum\limits_{c \in \mathcal{A}}\exp[\mu_{k,n}^T \bar{\theta}_{c,n}]}
\end{align}
\begin{cor} [\textbf{Boltzman distributions}]
\label{corollary1}
Let the behavioral and target policies be defined according to \eqref{eq:Boltzman}. 
Then, for all agents $k\in \mathcal{K}$, the actor parameter estimation error $\Delta \theta_{k,n}$ is $\varepsilon$-bounded if the state estimation error $\Delta \mu_{k,n}$ is bounded as 
    \begin{align}
    \|\Delta\mu_{k,n}^T\| \leq \min(\widetilde{B}_1,\widetilde{B}_2,\widetilde{B}_3)
\end{align}
where $\widetilde{B}_1$ 
 and $\widetilde{B}_2$ are defined in \eqref{eq:B_1_main} and \eqref{eq:B_2_main}, respectively, and 
\begin{align}
&\widetilde{B}_3\triangleq  \frac{1}{\|\Theta\|}\ln \left[ \frac{\min  (\widetilde{D}_1, \widetilde{D}_2
     ,\widetilde{D}_3)+\rho_{k,n}(a)}{\exp[\mu_{k,n}^T \Theta]}\right]\\
     &\Theta \triangleq \theta_a-\bar{\theta}_a+\bar{\theta}_{\max}-\theta_{\min}\\
     &\theta_{\min}\triangleq \min \limits_{c\in\mathcal{A}}\theta_{c},  \quad \bar{\theta}_{\max}\triangleq \max \limits_{c\in\mathcal{A}}\bar{\theta}_{c}
\end{align} \QEDA
\end{cor}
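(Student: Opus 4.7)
The plan is to reduce Corollary~\ref{corollary1} to Theorem~\ref{theorem2} by exploiting the fact that, under Boltzmann policies, the importance ratio $\rho_{k,n}$ is a smooth deterministic function of the belief $\mu_{k,n}$. Thus the bound $\|\Delta\rho_{k,n}\|\leq\min(\widetilde{D}_1,\widetilde{D}_2,\widetilde{D}_3)$ required by Theorem~\ref{theorem2} is not an independent hypothesis here, but can be enforced by controlling $\|\Delta\mu_{k,n}\|$ alone. The bounds $\widetilde{B}_1$ and $\widetilde{B}_2$ carry over unchanged from Theorem~\ref{theorem2}; the new bound $\widetilde{B}_3$ is precisely the strength of state-estimation accuracy that suffices to guarantee the required control of $\Delta\rho_{k,n}$.

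The key analytical step is to derive a pointwise upper bound for $\rho_{k,n}(a)$ of the form $\exp[\mu_{k,n}^T\Theta]$. Substituting the Boltzmann expressions from \eqref{eq:Boltzman} into $\rho_{k,n}(a)=\pi_{k,n}(a\mid\mu_{k,n})/b_{k,n}(a\mid\mu_{k,n})$ gives
\[
\rho_{k,n}(a) = \exp\!\bigl[\mu_{k,n}^T(\theta_{a,n}-\bar\theta_{a,n})\bigr]\cdot\frac{\sum_{c\in\mathcal{A}}\exp[\mu_{k,n}^T\bar\theta_{c,n}]}{\sum_{c\in\mathcal{A}}\exp[\mu_{k,n}^T\theta_{c,n}]}.
\]
Since $\mu_{k,n}$ is a probability vector with non-negative entries summing to one, the numerator sum is bounded above by $|\mathcal{A}|\exp[\mu_{k,n}^T\bar\theta_{\max}]$ and the denominator sum below by $|\mathcal{A}|\exp[\mu_{k,n}^T\theta_{\min}]$ (componentwise max/min as defined in the statement). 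Cancelling $|\mathcal{A}|$ yields $\rho_{k,n}(a)\leq\exp[\mu_{k,n}^T\Theta]$, and exactly the same estimate holds with $\mu_{k,n}$ replaced by $\widehat\mu_{k,n}$.

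With this in hand, I would impose the desired bound $\widehat\rho_{k,n}(a)-\rho_{k,n}(a)\leq\min(\widetilde{D}_1,\widetilde{D}_2,\widetilde{D}_3)$ through the sufficient condition $\exp[\widehat\mu_{k,n}^T\Theta]\leq\rho_{k,n}(a)+\min(\widetilde{D}_1,\widetilde{D}_2,\widetilde{D}_3)$. Taking logarithms, subtracting $\mu_{k,n}^T\Theta$, and applying Cauchy--Schwarz to $\Delta\mu_{k,n}^T\Theta$ gives the explicit bound $\|\Delta\mu_{k,n}\|\leq\widetilde{B}_3$. Combining with the Theorem~\ref{theorem2} bounds $\widetilde{B}_1,\widetilde{B}_2$ on $\|\Delta\mu_{k,n}\|$ and invoking the conclusion of Theorem~\ref{theorem2} then yields $\varepsilon$-boundedness of $\Delta\theta_{k,n}$.

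The main obstacle I expect is the two-sided control of $\Delta\rho_{k,n}$: the chain of inequalities above handles only the direction $\widehat\rho_{k,n}\geq\rho_{k,n}$, so a companion argument is needed for the opposite direction. The natural route is to derive a symmetric lower bound of the form $\rho_{k,n}(a)\geq\exp[\mu_{k,n}^T\Theta']$ by swapping the roles of $\bar\theta_{\max}$ and $\theta_{\min}$ (i.e.\ upper-bounding the $\theta$-sum and lower-bounding the $\bar\theta$-sum) and then repeating the logarithmic manipulation; absorbing the resulting companion bound into $\widetilde{B}_3$ via a minimum over both directions. A secondary technical issue is to ensure the logarithm in $\widetilde{B}_3$ has a strictly positive argument at every iteration, which is automatic as long as $\min(\widetilde{D}_1,\widetilde{D}_2,\widetilde{D}_3)>0$ and the upper bound $\exp[\mu_{k,n}^T\Theta]$ is not too loose, and to track that $\Theta$ depends on time through the iterates $\theta_{c,n},\bar\theta_{c,n}$, so that $\widetilde{B}_3$ must be recomputed at each step just as the bounds in Theorem~\ref{theorem2} are.
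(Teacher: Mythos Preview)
Your approach is essentially the same as the paper's: bound $\widehat{\rho}_{k,n}(a)$ above by $\exp[\widehat{\mu}_{k,n}^T\Theta]$ via the same max/min manipulation of the Boltzmann sums, impose $\exp[(\mu_{k,n}+\Delta\mu_{k,n})^T\Theta]\leq \rho_{k,n}(a)+\min(\widetilde D_1,\widetilde D_2,\widetilde D_3)$, take logarithms and apply a norm inequality to isolate $\|\Delta\mu_{k,n}\|$, then combine with the $\widetilde B_1,\widetilde B_2$ conditions from Theorem~\ref{theorem2}. Your concern about two-sided control of $\Delta\rho_{k,n}$ is a legitimate observation; the paper's proof treats only the same one-sided direction and does not supply the companion lower bound you outline, so on this point you are in fact more careful than the original argument.
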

The Boltzmann distribution is a common choice for modeling policies in reinforcement learning. In fact, the impact of the particular policy model is confined to the concluding steps of the proof for Theorem \ref{theorem2}.  Consequently, it is reasonable to conjecture that replicating such an analysis for alternative distributions would entail minimal additional effort.\par 
It is important to highlight that the behavioral policy in the proposed scheme is assumed to be time-invariant. While this assumption holds in many cases, it could pose challenges for certain specially designed behavioral policies. For instance, in \cite{Zhaikhan}, the behavioral policy is crafted to ensure sufficient exploration of the state space, which may require time-varying characteristics for optimal performance. Therefore, future study may include a version with time-variant behavioral policies.
\begin{figure*}[]
\begin{minipage}[b]{0.5\linewidth}
  \centering
  \centerline{\includegraphics[width=10cm]{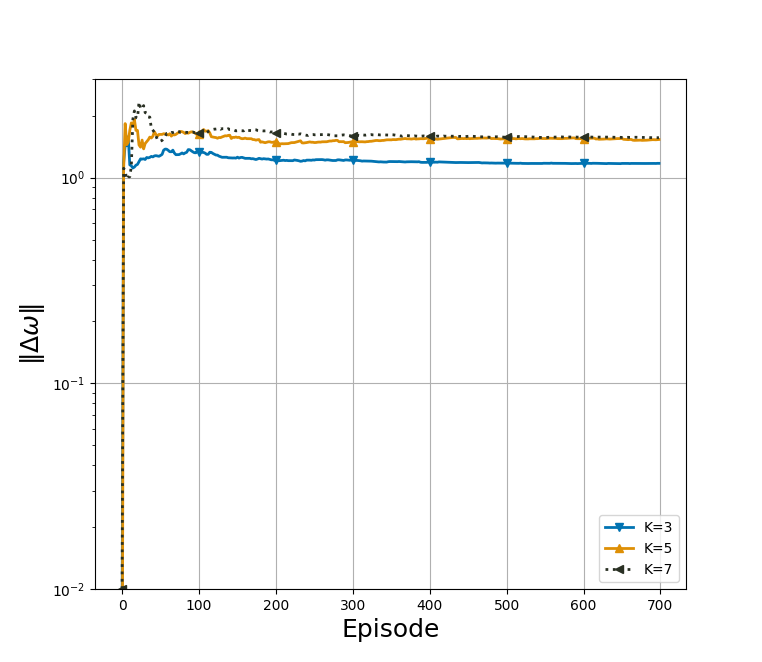}}
  \centerline{(a) Critic error  }\medskip
\end{minipage}
\begin{minipage}[b]{0.5\linewidth}
  \centering
  \centerline{\includegraphics[width=10cm]{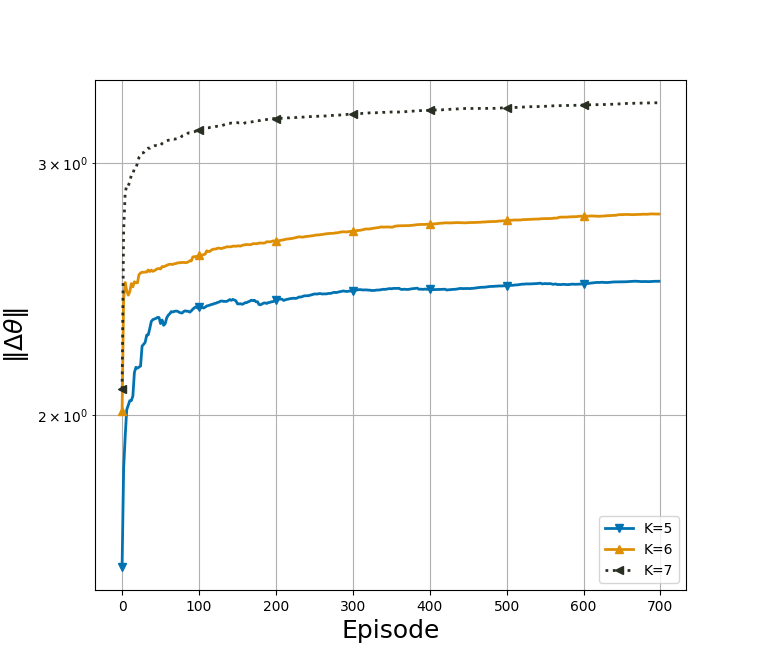}}
\centerline{(b) Actor error  }\medskip
\end{minipage}
\begin{minipage}[b]{0.5\linewidth}
  \centering
  \centerline{\includegraphics[width=10cm]{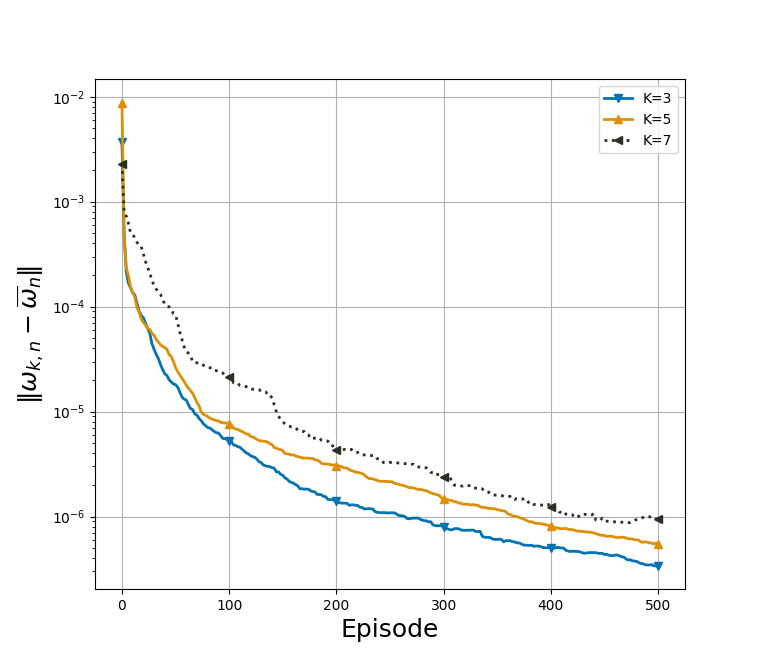}}
  \centerline{(c) Critic agreement }\medskip
  
\end{minipage}
\begin{minipage}[b]{0.5\linewidth}
  \centering
  \centerline{\includegraphics[width=10cm]{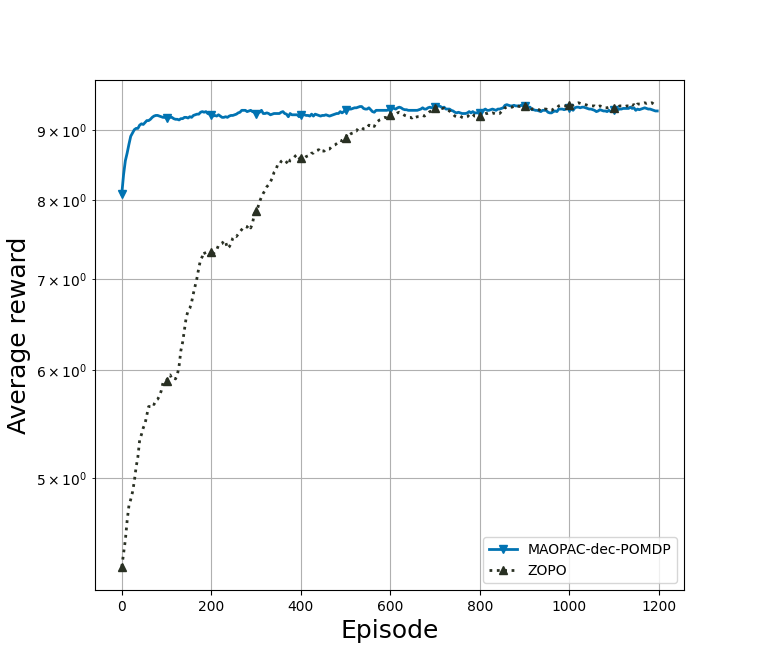}}
  \centerline{(d) MAOPAC-dec-POMDP vs ZOPO }\medskip
\end{minipage}
\caption{Comparison between MAOPAC and the proposed MAOPAC-dec-POMDP: (a) shows the difference in critic values computed by MAOPAC and MAOPAC-dec-POMDP for different numbers of agents: $K=3,K=5$ and $K=7$; (b) shows the difference in actor values computed by MAOPAC and MAOPAC-dec-POMDP for different numbers of agents: $K=5,K=7$ and $K=9$ (c) shows the network agreement for different numbers of agents: $K=3,K=5$ and $K=7$; (d) compares the proposed MAOPAC-dec-POMDP and ZOPO in terms of cumulative average reward}
\label{fig:res1}
\end{figure*}
\section{Experimental results and Discussions}
\label{sec:experiment}
For experiments we use the grid-based scenario from  \cite{Policy_evaluation}. It involves $K$ agents (representing radars) with fixed location  and one object with changing location over $h\times h$ grid (see Figure \ref{fig:experiment}).  The states of the underlying POMDP  are the cells of the grid. The objective for all agents is to correctly detect the location of the moving object (i.e., the cell where it is present). Hence, actions of the POMDP are also cells of the grid. 
The actions by  agents  correspond to which cell in the grid to hit.  Agents are rewarded based on the accuracy  of their selections relative  the actual location of the object. Furthermore, the object tries to move away from the most recent hit locations. In this way, the  transition between states is dependent on the actions chosen by agents. \par 
Figure \ref{fig:res1}(a) and Figure \ref{fig:res1}(b) illustrate the absolute difference in estimating the critic and actor parameters between two scenarios: one where the global state is only partially observed (MAOPAC-dec-POMDP), and the other where it is fully observed (MAOPAC). As depicted in these figures, both critic and actor values for MAOPAC-POMDP closely align with that of MAOPAC.   Convergence  of MAOPAC to the optimal policy under full observability has been proven in \cite{SUTTLE20201549}. Hence, it follows that MAOPAC-POMDP can attain an $\varepsilon$-optimal solution. \par 
While each agent possesses its own actor parameter $\theta_{k,n}$, the introduction of the critic parameter $\omega_{k,n}$ serves to model global state values. Consequently, it is imperative that the critic variables $\{\omega_{k,n}\}_{k=1}^{K}$ across all agents agree with each other. This requirement is substantiated by Figure \ref{fig:res1}(c), which depicts the difference (averaged across all agents) between individual critic parameters $\omega_{k,n}$ and the network average:
\begin{align}
\bar{\omega} \triangleq \frac{1}{K}\sum_{k=1}^{K}\omega_{k,n}.
\end{align}
\subsection{Comparision between MAOPAC-dec-POMDP and ZOPO}   
ZOPO (Zeroth-Order Policy Optimization) is an extension of Monte-Carlo-based policy gradient approaches such as the REINFORCE algorithm. ZOPO does not require knowledge of the actual gradient of the objective function (expected reward). Instead, ZOPO approximates it using samples of the objective function under perturbed parameters.\par 
ZOPO is inherently simple to implement and can be useful when the gradient of a function is not available. However, it tends to exhibit slow convergence and high noise levels, as confirmed in Figure \ref{fig:res1}(d). As previously mentioned, ZOPO relies on samples of the expected reward to compute estimates, necessitating multi-step sampling for each new update of the policy parameters. \par 
In contrast, the proposed MAOPAC-dec-POMDP (Multi-Agent Off-Policy Actor-Critic with Partial Observability) does not require such extensive sampling but instead relies on an internal loop for state estimation. Therefore,  both algorithms have similar levels of complexity.\par 
Moreover, MAOPAC-dec-POMDP can operate off-policy, which is generally more advantageous than on-policy solutions. Off-policy methods are more practical, easier to implement, and broaden the range of potential applications. 
\section{Conclusion}
\label{sec:Conclusion}
This paper proposes a multi-agent off-policy actor-critic algorithm for partially observable environments. The key innovation lies in estimating the global state through social learning to guarantee $ b_{\epsilon}-$boundedness of estimation error. The performance of the resulting algorithm is illustrated by comparing against state-of-the-art solutions. Currently, the algorithm's implementation necessitates a two-time scale learning approach. Future extensions would focus on streamlining the learning process into a single time-scale.  Additionally, extending the algorithm to accommodate time-variant behavioral policies opens doors to diverse applications, notably in exploration problems.
\section*{Acknowledgment}
\begin{appendices}
\section{Proof of Theorem 1}
\subsection{Bounds of intermediate variables}
\begin{lemma}
\label{lemma:M_bound}
    $M_{k,n}$ can be upper-bounded by $B_M$ defined as 
    \begin{align}
        B_M\triangleq \lambda+\frac{(1-\lambda)}{1-\gamma/ b_{\epsilon}}
    \end{align} \QEDA
\end{lemma}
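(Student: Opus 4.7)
\section*{Proof proposal for Lemma \ref{lemma:M_bound}}

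The plan is to first bound the auxiliary quantity $F_{k,n}$ obtained from the recursion \eqref{alg:F}, and then simply substitute this bound into the definition \eqref{alg:M} of $M_{k,n}$. The key ingredients are the initialization $F_{k,0}=0$, the uniform ratio bound $\rho_{k,n}\leq 1/b_\epsilon$ from \eqref{eq:rho_constraint}, and the assumption $b_\epsilon>\gamma$ from Assumption~\ref{assumption:policies}, which makes the relevant geometric series convergent.

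First I would unroll the recursion \eqref{alg:F}. Iterating from $F_{k,0}=0$ gives
\begin{align}
F_{k,n} \;=\; 1+\gamma\rho_{k,n-1}+\gamma^2\rho_{k,n-1}\rho_{k,n-2}+\cdots+\gamma^{n-1}\prod_{j=1}^{n-1}\rho_{k,j},
\end{align}
which can be written compactly as $F_{k,n}=\sum_{i=0}^{n-1}\gamma^{i}\prod_{j=0}^{i-1}\rho_{k,n-1-j}$, with the usual convention that the empty product equals $1$. Next I would apply the uniform bound $\rho_{k,m}\leq 1/b_\epsilon$ term-by-term, producing
\begin{align}
F_{k,n} \;\leq\; \sum_{i=0}^{n-1}\bigl(\gamma/b_\epsilon\bigr)^{i} \;\leq\; \sum_{i=0}^{\infty}\bigl(\gamma/b_\epsilon\bigr)^{i} \;=\; \frac{1}{1-\gamma/b_\epsilon},
\end{align}
where the second inequality uses $\gamma/b_\epsilon<1$ (Assumption~\ref{assumption:policies}) to pass to the closed-form geometric sum, which also shows the bound holds uniformly in $n$.

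Finally, plugging this bound into \eqref{alg:M} gives
\begin{align}
M_{k,n} \;=\; \lambda+(1-\lambda)F_{k,n} \;\leq\; \lambda+\frac{1-\lambda}{1-\gamma/b_\epsilon} \;=\; B_M,
\end{align}
completing the proof. The argument is essentially routine; the only subtle point—and the one worth stating carefully—is that Assumption~\ref{assumption:policies} ($b_\epsilon>\gamma$) is exactly what is needed for the geometric series to converge and for $B_M$ to be finite. I would not expect any genuine obstacle beyond justifying the term-wise use of $\rho_{k,n}\leq 1/b_\epsilon$, which is immediate from \eqref{eq:rho_constraint}.
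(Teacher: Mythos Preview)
Your proposal is correct and follows essentially the same approach as the paper: unroll the recursion \eqref{alg:F} for $F_{k,n}$, bound each factor using $\rho_{k,n}\leq 1/b_\epsilon$, sum the resulting geometric series via Assumption~\ref{assumption:policies}, and substitute into \eqref{alg:M}. Your treatment is in fact slightly more careful with the indexing and the initialization than the paper's own version.
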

\begin{proof}
From  \eqref{alg:F} we get 
\begin{align}
F_{k,n}=\sum\limits_{i=0}^n \prod_{j=i+1}^n \gamma\rho_i\stackrel{\text { (a) }}{\leq}\sum\limits_{i=0}^n \prod_{j=i+1}^n \frac{\gamma}{ b_{\epsilon}}=\frac{1}{1-\frac{\gamma}{ b_{\epsilon}}} 
\end{align}
where in (a) we assumed $\rho_n \leq \frac{1}{ b_{\epsilon}}$ and (b) is valid because $\frac{\gamma}{ b_{\epsilon}}\leq 1$ by Assumption \ref{assumption:policies}. 
Therefore, using the definition of $M_{k,n}$ in \eqref{alg:M}, we can get the following bound 
\begin{align}
M_{k,n}\leq \lambda+\frac{(1-\lambda)}{1-\gamma/ b_{\epsilon}}\triangleq B_M
\end{align}
\end{proof}
\begin{lemma}
\label{lemma:e_bound}
$\|\widehat{e}_{k,n}\|$ and $\|e_{k,n}\|$  can be upper-bounded by $B_e$ defined as
\begin{align}
    B_e\triangleq \frac{1}{1-\lambda\gamma}\left(\lambda+\frac{1-\lambda}{1-\gamma/ b_{\epsilon}}\right)
\end{align} \QEDA
\end{lemma}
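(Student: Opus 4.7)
\textbf{Proof proposal for Lemma \ref{lemma:e_bound}.} The plan is to unroll the linear recursion \eqref{alg:e} and then bound each summand using Lemma \ref{lemma:M_bound} together with the fact that $\mu_{k,n}$ (and likewise $\widehat{\mu}_{k,n}$) is a belief vector. Concretely, initializing $e_{k,-1}=0$, iterating \eqref{alg:e} gives
\begin{equation}
e_{k,n} \;=\; \sum_{i=0}^{n}(\gamma\lambda)^{n-i}\, M_{k,i}\,\mu_{k,i},
\end{equation}
so by the triangle inequality
\begin{equation}
\|e_{k,n}\| \;\leq\; \sum_{i=0}^{n}(\gamma\lambda)^{n-i}\, |M_{k,i}|\,\|\mu_{k,i}\|.
\end{equation}

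Next I would invoke two elementary bounds on the factors. Since $\mu_{k,i}$ has non-negative entries summing to one (it is a probability vector, see \eqref{belief_update2}--\eqref{eq:assign1}), one has $\|\mu_{k,i}\|_2 \leq \|\mu_{k,i}\|_1 = 1$. Lemma \ref{lemma:M_bound} gives $|M_{k,i}| \leq B_M = \lambda + (1-\lambda)/(1-\gamma/b_{\epsilon})$. Substituting and summing the geometric series (which converges because $\gamma\lambda<1$ by the hypotheses $\gamma,\lambda\in(0,1)$) yields
\begin{equation}
\|e_{k,n}\| \;\leq\; B_M\sum_{i=0}^{n}(\gamma\lambda)^{n-i} \;\leq\; \frac{B_M}{1-\gamma\lambda} \;=\; \frac{1}{1-\lambda\gamma}\!\left(\lambda+\frac{1-\lambda}{1-\gamma/b_{\epsilon}}\right) \;=\; B_e.
\end{equation}

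The bound for $\widehat{e}_{k,n}$ is identical: the full-observation belief $\widehat{\mu}_{k,i}$ is a one-hot indicator of the true state, so $\|\widehat{\mu}_{k,i}\|=1$, and $\widehat{M}_{k,i}$ obeys the same recursion as $M_{k,i}$ and therefore the same bound $|\widehat{M}_{k,i}|\leq B_M$ (Lemma \ref{lemma:M_bound} only uses $\rho_n\leq 1/b_{\epsilon}$, which applies in both the fully and partially observed settings by Assumption \ref{assumption:policies} together with \eqref{eq:rho_constraint}). No additional step is needed.

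There is no genuine obstacle here: the argument is a routine geometric-series computation once Lemma \ref{lemma:M_bound} is in hand. The only point that merits care is the passage $\|\mu_{k,i}\|_2\leq 1$, which relies on the normalization inherent to the social-learning updates \eqref{belief_update}--\eqref{belief_update2} (and the uniform initialization $\widetilde{\mu}_{k,0}=1/S$); this is preserved along the iterations because both the Adapt and Combine steps are convex combinations on the simplex.
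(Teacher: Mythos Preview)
Your proposal is correct and follows essentially the same route as the paper: unroll the recursion \eqref{alg:e}, bound each summand via $\|\mu_{k,i}\|\leq 1$ and Lemma \ref{lemma:M_bound}, and sum the resulting geometric series using $\gamma\lambda<1$. The only cosmetic difference is indexing (the paper sums $i=0,\dots,n-1$ with exponent $n-1-i$, you sum $i=0,\dots,n$ with exponent $n-i$), which is immaterial since both are dominated by the full geometric series.
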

\begin{proof}
  From the update rule for $e_{k,n}$ 
\begin{align}
e_{k,n}=\sum\limits_{i=0}^{n-1}(\gamma \lambda)^{n-1-i} M_{i} \mu_{k,i}
\end{align}
Therefore, $\|e_{k,n}\|$ can be bounded as 
\begin{align}
\|e_{k,n}\|&\leq \sum\limits_{i=0}^{n-1}(\gamma \lambda)^{n-1-i} |M_{k,i}| 
\|\mu_{k,i}\|\nonumber\\
&\stackrel{\text {(a)}}{\leq }\sum\limits_{i=0}^{n-1}(\gamma \lambda)^{n-1-i} B_M\nonumber\leq \sum\limits_{i=0}^{\infty}(\gamma \lambda)^{n-1-i} B_M \nonumber\\
&\stackrel{\text {(b)}}{= } \frac{1}{1-\lambda\gamma}\left(\lambda+\frac{1-\lambda}{1-\gamma/ b_{\epsilon}}\right)\triangleq B_e  
\end{align}
where  (a) is valid due to $\|\mu_{k,n}\|\leq 1 $ and  Lemma \ref{lemma:M_bound} and (b) is valid if we choose $\lambda\gamma\leq 1$. Similarly, we can show that 
\begin{align}
\|\widehat{e}_{k,n}\|\leq B_e
\end{align}
\end{proof}
\begin{lemma}
\label{lemma:M_theta_bound}
    $M^{\theta}_{k,i}$ can be upper-bounded by $B_M^{\theta}$ defined as
    \begin{align}
        B_M^{\theta} \triangleq \frac{1-(1-\zeta)\gamma/ b_{\epsilon}}{1-\gamma/ b_{\epsilon}} 
    \end{align} \QEDA
\end{lemma}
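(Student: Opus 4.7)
The plan is to mimic the argument used for Lemma \ref{lemma:M_bound}, since $M^{\theta}_{k,n}$ has essentially the same structure as $M_{k,n}$, differing only in that the coefficient $(1-\lambda)$ is replaced by $\zeta$ and the constant offset $\lambda$ is replaced by $1$. Concretely, starting from the defining recursion
\begin{align}
M^{\theta}_{k,n} = 1 + \zeta \gamma \rho_{k,n-1} F_{k,n-1},
\end{align}
I would immediately invoke the two ingredients already established: the importance-ratio bound $\rho_{k,n-1}\leq 1/b_{\epsilon}$ guaranteed by Assumption \ref{assumption:policies} (and used in the proof of Lemma \ref{lemma:M_bound}), together with the geometric-series bound $F_{k,n-1}\leq 1/(1-\gamma/b_{\epsilon})$ obtained in the proof of Lemma \ref{lemma:M_bound}. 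Substituting these two estimates directly into the recursion yields
\begin{align}
M^{\theta}_{k,n} \leq 1 + \frac{\zeta\gamma/b_{\epsilon}}{1-\gamma/b_{\epsilon}}.
\end{align}

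The only remaining step is to put the right-hand side over a common denominator, which gives
\begin{align}
1 + \frac{\zeta\gamma/b_{\epsilon}}{1-\gamma/b_{\epsilon}} = \frac{1-\gamma/b_{\epsilon}+\zeta\gamma/b_{\epsilon}}{1-\gamma/b_{\epsilon}} = \frac{1-(1-\zeta)\gamma/b_{\epsilon}}{1-\gamma/b_{\epsilon}} = B^{\theta}_M,
\end{align}
matching the claimed bound. The requirement $\gamma/b_{\epsilon}<1$ ensuring positivity of the denominator is precisely the second part of Assumption \ref{assumption:policies}, so no additional hypothesis is needed.

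There is no real obstacle here: the argument is a one-line substitution followed by algebraic simplification, and every building block has already been proven in the preceding two lemmas. The only thing worth double-checking is that $\zeta\in(0,1)$, so that $B^{\theta}_M$ lies between $1$ and $B_M/\lambda$-type quantities and is genuinely finite; this is guaranteed by the standing assumption on $\zeta$ stated in Section \ref{sec:algorithm}. Hence the proof is a short, direct computation with no technical subtleties.
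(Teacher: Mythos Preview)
Your proposal is correct and follows essentially the same route as the paper: both start from the recursion \eqref{alg:M_theta}, apply $\rho_{k,n-1}\le 1/b_{\epsilon}$ together with the geometric bound $F_{k,n-1}\le 1/(1-\gamma/b_{\epsilon})$, and then simplify algebraically to obtain $B_M^{\theta}$. The only cosmetic difference is that the paper re-derives the $F_{k,n}$ bound in place rather than citing it from the proof of Lemma~\ref{lemma:M_bound}.
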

\begin{proof}
     We start from the update rule  of  $M^{\theta}_{k,i}$
\begin{align}
\label{eq:M1}
 M_{k,n}^\theta=1+\lambda_\theta \gamma \rho_{k,n-1} F_{k,n-1} 
\end{align}
$ F_{k,n}$ can be upper-bounded by
\begin{align}
\label{eq:F1}
     F_{k,n}&=1+\gamma \rho_{n-1} F_{k,n-1} = \sum \limits_{i=0}^{n-1} \prod_{j=i+1}^{n-1}(\gamma \rho_{j})F_{k,0}\nonumber\\
     &\stackrel{\text { (a) }}{\leq} \sum \limits_{i=0}^{n-1} \left(\frac{\gamma}{ b_{\epsilon}}\right)^{n-i-1} 
     \leq\sum \limits_{i=0}^{\infty} \left(\frac{\gamma}{ b_{\epsilon}}\right)^{i}\stackrel{\text { (b) }}{=}\frac{1}{1-\gamma/ b_{\epsilon}}
\end{align}
where in (a) we use $\rho_{n}\leq \frac{1}{ b_{\epsilon}}$ and 
$F_{k,0}=1$, $\forall k \in \mathcal{K}$, 
and (b) is valid due to Assumption \ref{assumption:policies}.  \par 
Using \eqref{eq:F1} and the property $\rho_n\leq \frac{1}{ b_{\epsilon}}$, $ M_{k,n}^\theta$ can be upper-bounded by 
\begin{align}
     M_{k,n}^\theta \leq 1+\frac{\zeta\gamma/ b_{\epsilon}}{1-\gamma/ b_{\epsilon}}=\frac{1-(1-\zeta)\gamma/ b_{\epsilon}}{1-\gamma/ b_{\epsilon}}  \triangleq B_{M}^{\theta}
\end{align}
Similarly, we can show that 
\begin{align}
\label{eq:M2}
     \widehat{M}_{k,n}^\theta \leq B_{M}^{\theta}
\end{align}
\end{proof}
\begin{lemma}
\label{lemma:omega_bound2}
For all agents $k\in \mathcal{K}$, $\|\omega_{k,n}\|$ and $\|\widehat{\omega}_{k,n}\|$ can be upper-bounded by $B_n^{\omega}$ defined as 
    \begin{align}
    \label{eq:def_B_omega}
       B_n^{\omega}\triangleq \sum\limits_{i=0}^{n-1} \Omega^{n-i} \frac{\beta_{i}R_{\max}  B_e \|\omega_{0, \max}\|}{ b_{\epsilon}}
    \end{align}  \QEDA
\end{lemma}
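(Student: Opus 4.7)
The plan is to bound the network maximum $\omega_{n,\max}\triangleq \max_{k\in\ncalK}\|\omega_{k,n}\|$ by a one-step linear recursion and then unroll it. The symmetric bound for $\widehat{\omega}_{k,n}$ follows from exactly the same argument, since the full-observation updates are structurally identical and all intermediate estimates on $\rho$, $e$, and $\delta$ apply verbatim.

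First, I would exploit the diffusion step \eqref{alg:omega2}. By the triangle inequality and the row-stochasticity part of Assumption \ref{assumption:combination}, $\|\omega_{k,n+1}\|\leq \sum_{\ell\in\ncalN_k}c_{\ell,k}\|\widetilde{\omega}_{\ell,n}\|\leq \max_{\ell}\|\widetilde{\omega}_{\ell,n}\|$, so it suffices to bound the maximum of $\|\widetilde{\omega}_{\ell,n}\|$ across the network.

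Next, I would bound $|\delta_{k,n}|$. Because $\mu_{k,n}$ and $\eta_{k,n}$ are probability vectors (and hence have $2$-norm at most $1$), Cauchy--Schwarz applied to the definition of $\delta_{k,n}$ gives $|\delta_{k,n}|\leq R_{\max}+(1+\gamma)\|\omega_{k,n}\|$. Substituting this together with the importance-ratio bound $\rho_{k,n}\leq 1/b_{\epsilon}$ from \eqref{eq:rho_constraint} and the eligibility-trace bound $\|e_{k,n}\|\leq B_e$ from Lemma \ref{lemma:e_bound} into the critic update \eqref{alg:omega1} yields
\begin{align*}
\|\widetilde{\omega}_{\ell,n}\|\leq \left(1+\tfrac{\beta_n(1+\gamma)B_e}{b_{\epsilon}}\right)\|\omega_{\ell,n}\|+\tfrac{\beta_n R_{\max}B_e}{b_{\epsilon}}.
\end{align*}
Taking the maximum over $\ell$ and using $\beta_n\leq \beta_0$ (standard under Assumption \ref{assumption:step}) to pull the step size out of the growth factor, I obtain the scalar recursion $\omega_{n+1,\max}\leq \Omega\,\omega_{n,\max}+\beta_n R_{\max}B_e/b_{\epsilon}$, with $\Omega=1+\beta_0(1+\gamma)B_e/b_{\epsilon}$ exactly as defined in Theorem \ref{theorem2}.

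Finally, unrolling this linear recursion from $n=0$ gives $\omega_{n,\max}\leq \Omega^{n}\|\omega_{0,\max}\|+\sum_{i=0}^{n-1}\Omega^{n-1-i}\beta_i R_{\max}B_e/b_{\epsilon}$, which, after absorbing the initial term into the sum using $\|\omega_{0,\max}\|$ as a common factor and the convention on the index shift in \eqref{eq:def_B_omega}, matches the stated $B_n^{\omega}$. The main obstacle I anticipate is purely bookkeeping: the combination step forces us to track the network-wide maximum rather than per-agent norms, so the recursion must be closed at the level of $\omega_{n,\max}$ before the induction can proceed; once that is done, the calculation is a routine linear recursion. The proof for $\widehat{\omega}_{k,n}$ proceeds identically and is therefore omitted modulo one line noting that $\widehat{\rho}_{k,n}$, $\widehat{e}_{k,n}$, and $\widehat{\delta}_{k,n}$ obey the same bounds.
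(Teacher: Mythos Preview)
Your proposal is correct and follows essentially the same route as the paper: expand $\delta_{k,n}$ in terms of $\omega_{k,n}$, invoke the bounds $\rho_{k,n}\leq 1/b_{\epsilon}$ and $\|e_{k,n}\|\leq B_e$, use stochasticity of $C$ to pass through the combination step, and unroll the resulting linear recursion with growth factor $\Omega$. Your explicit introduction of the network maximum $\omega_{n,\max}$ is a cleaner way to close the recursion than the paper's slightly implicit treatment, but the argument is the same in substance.
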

\begin{proof}
We start from the update for $\omega_{k,n}$ :
\begin{align}
        \omega_{k,n}=\sum\limits_{\ell\in \mathcal{N}_k}c_{k,\ell}\left(\omega_{\ell, n-1}+\beta_{n}\rho_{\ell,n}\delta_{\ell,n}e_{\ell,n}\right)
\end{align}
Using the definition of $\delta_{\ell, n}$ and the reward upper-bound $r_{\ell,n}\leq R_{\max}$, we can upper-bound $\omega_{k,n}$ as 
\begin{align}
    \|\omega_{k,n}\|&\leq\sum\limits_{\ell\in \mathcal{N}_k}c_{k,\ell}\left(\|\omega_{\ell, n-1}\|\right.\nonumber\\
    &+\|(\gamma \eta_{k,n}- \mu_{k,n})\beta_{n}\rho_{\ell,n} e_{\ell,n}\|\|\omega_{\ell, n-1}\| \nonumber\\
        &\left.+\beta_{n}\rho_{\ell,n}R_{\max} |e_{\ell,n}|\right) \nonumber\\
        &\stackrel{\text{(a)}}{\leq }\sum\limits_{\ell\in \mathcal{N}_k}c_{k,\ell}\left(\left(1+(1+\gamma)\frac{\beta_{0} B_e}{ b_{\epsilon}} \right)\|\omega_{\ell, n-1}\|\right.\nonumber\\
    &\left.+\beta_{n}\frac{\beta_{n}R_{\max}B_e}{ b_{\epsilon}}\right) \nonumber\\
        &\stackrel{\text{(b)}}{\leq }\sum\limits_{i=0}^{n-1} \Omega^{n-i} \frac{\beta_{i}R_{\max}  B_e \|\omega_{0, \max}\|}{ b_{\epsilon}}\triangleq B^{\omega}_n  
\end{align}
 where $\|\omega_{\max}\|=\max\limits_{k\in \mathcal{K}} \|\omega_{k,0}\|$, (a) is valid due to the assumption $\beta_0\geq \beta_{n}$ and $\|\mu_{k,n}\|\leq 1$, $\forall n\geq 0$  and (b) is obtained by iteratively applying inequality (a) and using the definition of $\Omega$. Similarly, we can show that
 \begin{align}
     \|\widehat{\omega}_{k,n}\|\leq B^{\omega}_n
 \end{align}
\end{proof}
\begin{lemma}
\label{lemma:delta_bound}
For all agents $k\in \mathcal{K}$, 
$\|\delta_{k,n}\|$  and $\|\widehat{\delta}_{k,n}\|$  are upperbounded by $B_{n}^{\delta}$ defined as
 \begin{align}
 B_{n}^{\delta}\triangleq R_{\max}+(1+\gamma) B^\omega_{n}
 \end{align}
 where $B^{\omega}_n$ is given in \eqref{eq:def_B_omega}.\par  \QEDA
\end{lemma}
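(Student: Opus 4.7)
The plan is to unfold the definition of $\delta_{k,n}$ from step \eqref{alg:M_theta}-related update in Algorithm \ref{Algorithm1}, namely
\[
\delta_{k,n} = r_{k,n} + \gamma\,\omega_{k,n}^T \eta_{k,n} - \omega_{k,n}^T \mu_{k,n},
\]
and then apply the triangle inequality together with Cauchy--Schwarz to isolate the three contributions: the instantaneous reward, the one-step look-ahead critic value $\gamma\,\omega_{k,n}^T\eta_{k,n}$, and the current critic value $\omega_{k,n}^T\mu_{k,n}$. The first term is directly bounded by $R_{\max}$ via the assumed reward upper-bound. For the two inner products, I would use $|\omega_{k,n}^T x| \leq \|\omega_{k,n}\|\,\|x\|$ for $x\in\{\mu_{k,n},\eta_{k,n}\}$.

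Next, I would invoke the fact that $\mu_{k,n}$ and $\eta_{k,n}$ are belief vectors produced by the social-learning updates \eqref{belief_update}--\eqref{belief_update4}, hence each is a probability distribution over $\mathcal{S}$. Because probability vectors satisfy $\|x\|_2 \leq \|x\|_1 = 1$, both norms are at most one. Combining the three pieces then gives
\[
|\delta_{k,n}| \;\leq\; R_{\max} + (1+\gamma)\|\omega_{k,n}\|.
\]

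Finally, I would substitute the bound from Lemma \ref{lemma:omega_bound2}, $\|\omega_{k,n}\| \leq B_n^\omega$, to obtain $|\delta_{k,n}| \leq R_{\max} + (1+\gamma)B_n^\omega \triangleq B_n^\delta$. The argument for $\widehat{\delta}_{k,n}$ is word-for-word identical: it satisfies the same recursive definition but with the hat-variables $\widehat{\omega}_{k,n}, \widehat{\mu}_{k,n}, \widehat{\eta}_{k,n}$, where $\widehat{\mu}_{k,n}$ and $\widehat{\eta}_{k,n}$ are one-hot indicator vectors of the true state (so trivially of unit $1$-norm), and Lemma \ref{lemma:omega_bound2} also provides $\|\widehat{\omega}_{k,n}\| \leq B_n^\omega$.

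There is essentially no obstacle here; the only thing to be careful about is that $\|\mu_{k,n}\|_2 \leq 1$ requires observing that a probability vector's $2$-norm is dominated by its $1$-norm. Everything else is a direct application of previously established lemmas, so the proof should fit in a few lines and serves only as a bookkeeping step feeding into the bound $B_n^\delta$ used throughout the analysis of Theorem \ref{theorem2}.
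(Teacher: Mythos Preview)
Your proposal is correct and mirrors the paper's own proof almost exactly: the paper also unfolds $\delta_{k,n}=r_{k,n}+(\gamma\eta_{k,n}-\mu_{k,n})^T\omega_{k,n}$, bounds the reward by $R_{\max}$, uses $\|\mu_{k,n}\|\leq 1$ (and implicitly $\|\eta_{k,n}\|\leq 1$) to get the $(1+\gamma)\|\omega_{k,n}\|$ term, and then invokes Lemma~\ref{lemma:omega_bound2}. Your extra remark that the $2$-norm of a probability vector is dominated by its $1$-norm is a helpful justification the paper leaves implicit.
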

\begin{proof}
We start from the definition of $\delta_{k,n}$
\begin{align}
\delta_{k,n}&= r_{k,n}+(\gamma\eta_{k,n}-\mu_{k,n})^T \omega_{k,n}\nonumber\\
&\stackrel{\text {(a) }}{\leq} R_{\max}+(1+\gamma)\|\omega_{k,n}\|\nonumber\\
&\stackrel{\text {(b)}}{\leq} R_{\max}+(1+\gamma)B_{n}^{\omega} 
\end{align}
where (a) is due to the assumption $\|\mu_{k,n}\|\leq 1$ and (b) is due to Lemma \ref{lemma:omega_bound2}.
Similarly, we can show  that
\begin{align}
\widehat{\delta}_{k,n} \leq B_{n}^{\delta}
\end{align}
\end{proof}
\begin{lemma}
\label{lemma:e2}
$\Delta e_{\ell,i}$ can be bounded as 
\begin{align}
    \|\Delta e_{\ell,i}\|  \leq \frac{\varepsilon  b_{\epsilon}^2\beta_0(1+\gamma)}{ \beta_n \beta_i B_i^{\delta}\Omega^{n-i}\Phi_n} 
\end{align}
if  for all $j \leq i$
    \begin{align}
    \|\Delta \mu_{k,j}\|&\leq  \frac{\varepsilon b_{\epsilon}^2\beta_0(1+\gamma)}{2\beta_n (\gamma \lambda)^{i-j}i|M_{k,j}| \beta_i B^{\delta}_{i}\Omega^{n-i}\Phi_n} \\
     \Delta \rho_{k,j} &\leq  \left(\frac{ b_{\epsilon}}{\gamma}\right)^{i-j}\frac{\varepsilon b_{\epsilon}\beta_0(1-\lambda)^{-1}(1+\gamma)}{2\beta_nF_{k,j}i^2\beta_i B^{\delta}_{i}\Omega^{n-i}\Phi_n} \label{eq:inter112}
    \end{align} \QEDA
\end{lemma}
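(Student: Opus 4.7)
The plan is to unroll the recursion for $e_{k,n}$ into a closed-form sum, subtract from the hatted version, apply a product-rule split, and then reduce the bound on $\Delta e_{\ell,i}$ to separate bounds on $\Delta \mu_{k,j}$ and $\Delta F_{k,j}$ (the latter being driven by $\Delta \rho_{k,j}$).

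First, I would iterate the update rule \eqref{alg:e} with the initial condition $e_{k,-1}=0$ to obtain
\begin{align*}
e_{k,i}=\sum_{j=0}^{i-1}(\gamma\lambda)^{i-1-j}M_{k,j}\mu_{k,j}, \qquad
\widehat{e}_{k,i}=\sum_{j=0}^{i-1}(\gamma\lambda)^{i-1-j}\widehat{M}_{k,j}\widehat{\mu}_{k,j}.
\end{align*}
Subtracting and applying the standard product-difference identity
$\widehat{M}_{k,j}\widehat{\mu}_{k,j}-M_{k,j}\mu_{k,j}
= \Delta M_{k,j}\widehat{\mu}_{k,j}+M_{k,j}\Delta\mu_{k,j}$
then gives
\begin{align*}
\Delta e_{k,i}=\sum_{j=0}^{i-1}(\gamma\lambda)^{i-1-j}\bigl(\Delta M_{k,j}\widehat{\mu}_{k,j}+M_{k,j}\Delta\mu_{k,j}\bigr).
\end{align*}
The $M_{k,j}\Delta\mu_{k,j}$ contribution can be controlled directly by the first hypothesis, using $\|\widehat{\mu}_{k,j}\|\le 1$ and the bound on $|M_{k,j}|$ from Lemma \ref{lemma:M_bound}.

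Next, I would control $\Delta M_{k,j}$. Since $M_{k,n}=\lambda+(1-\lambda)F_{k,n}$, we have $\Delta M_{k,j}=(1-\lambda)\Delta F_{k,j}$. Unrolling \eqref{alg:F} and using the same product-rule trick on $\rho_{k,m-1}F_{k,m-1}$ yields an expression for $\Delta F_{k,j}$ as a sum over $m\le j-1$ of terms proportional to $\Delta\rho_{k,m}$, with multiplicative factors that, after applying Assumption \ref{assumption:policies} (so $\gamma/b_\epsilon<1$) and $F_{k,m}\le 1/(1-\gamma/b_\epsilon)$ from \eqref{eq:F1}, are bounded by geometric powers $(\gamma/b_\epsilon)^{j-m}$. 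Substituting the hypothesis \eqref{eq:inter112} then produces cancellations: the $(b_\epsilon/\gamma)^{j-m}$ factor in the hypothesis exactly absorbs the $(\gamma/b_\epsilon)^{j-m}$ factor coming from the recursion for $F$, and the explicit $i^{-2}$ in \eqref{eq:inter112} absorbs the $O(i^{2})$ contribution from the double telescoping sum over $j$ and $m$.

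Finally, I would combine the two pieces via the triangle inequality. The $(\gamma\lambda)^{i-j}$ factor in the hypothesis on $\Delta\mu$ cancels the corresponding $(\gamma\lambda)^{i-1-j}$ weight in the outer sum, and the factor $i^{-1}$ in the $\Delta\mu$-hypothesis absorbs the linear length of that sum. After collecting constants and using $\|\widehat{\mu}_{k,j}\|\le 1$, the two contributions each amount to at most
$\tfrac{1}{2}\cdot \tfrac{\varepsilon\,b_\epsilon^{\,2}\beta_0(1+\gamma)}{\beta_n\beta_i B_i^\delta\Omega^{n-i}\Phi_n}$,
and summing delivers the target bound.

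The main obstacle is the bookkeeping in Step 2: the bound on $\Delta F_{k,j}$ is itself a nested sum in which every term inherits an additional geometric weight, so one must carefully verify that the specific exponents and the $i^{-2}$ factor in hypothesis \eqref{eq:inter112} are exactly matched to the double summation arising from unrolling both the $e$- and the $F$-recursions. Beyond this accounting, everything else is a direct application of the triangle inequality and the intermediate-variable bounds already established in Lemmas \ref{lemma:M_bound}--\ref{lemma:delta_bound}.
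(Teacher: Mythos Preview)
Your proposal is correct and follows essentially the same route as the paper: unroll the $e$-recursion into a geometric sum, split each summand via the product-difference identity into a $\Delta M_{k,j}$ part and a $\Delta\mu_{k,j}$ part, reduce $\Delta M_{k,j}$ to $(1-\lambda)\Delta F_{k,j}$, unroll the $F$-recursion to express $\Delta F_{k,j}$ as a geometrically weighted sum of $\Delta\rho$'s, and then verify that the stated hypotheses on $\Delta\mu_{k,j}$ and $\Delta\rho_{k,j}$ are tailored so that the nested sums collapse to the target bound. The only discrepancy is a harmless off-by-one in your upper summation limit (the paper sums to $i$, not $i-1$), and the paper makes explicit an intermediate worst-case step over the inner index $j$ that you fold into your ``$i^{-2}$ absorbs $O(i^2)$'' remark; neither changes the argument.
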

\begin{proof}
We start by extending the definition of   $\Delta e_{k,i}$:
   \begin{align}
        \Delta e_{k,i} &= \gamma \lambda \Delta e_{k,i-1}+\widehat{M}_{k,i} \widehat{\mu}_{k,i}- M_{k,i} \mu_{k,i}\nonumber\\
        &= \gamma \lambda \Delta e_{k,i-1}+\Delta M_{k,n} \widehat{\mu}_{k,i}+M_{k,i} \Delta \mu_{k,i} \nonumber\\
        &= \sum_{j=0}^{i} (\gamma \lambda)^{i-j} (\Delta M_{k,j} \widehat{\mu}_{k,j}+M_{k,j} \Delta \mu_{k,j})
        \end{align}
    %
Next, we impose the following constraints $\forall j \leq i$:
          \begin{align}
     \|\Delta M_{k,j}\|&\leq  \frac{\varepsilon b_{\epsilon}^2\beta_{0}(1+\gamma)}{2\beta_n(\gamma \lambda)^{i-j}i\beta_i B^{\delta}_i\Omega^{n-i}\Phi_n} \label{eq:delta_M}\\
         \|\Delta \mu_{k,j}\|&\leq  \frac{\varepsilon b_{\epsilon}^2\beta_0(1+\gamma)}{2\beta_n(\gamma \lambda)^{i-j}i M_{k,j}\beta_i B^{\delta}_{ i }\Omega^{n-i}\Phi_n}
    \end{align}
By definition $\Delta M_{k,j} = (1-\lambda) \Delta F_{k,j}$. We can extend $\Delta F_{k,j}$ as  
 \begin{align}
    \label{eq:int1}
        \Delta F_{k,j}&=\gamma(\widehat{\rho}_{k,j-1}\widehat{F}_{k,j-1}-\rho_{k,j-1}F_{k,j-1})\nonumber\\
        &= \gamma(\widehat{\rho}_{k,j-1}\Delta F_{k,j-1}+\Delta \rho_{k,j-1} F_{k,j-1})  \nonumber \\
        &= \sum \limits_{t=0}^{j-1} \prod_{z=t+1}^{j-1} (\gamma\rho_{k,z}) \gamma \Delta \rho_{k,t} F_{k,t}\nonumber\\
        &\leq\sum \limits_{t=0}^{j-1}  \left(\frac{\gamma}{ b_{\epsilon}}\right)^{j-t-1} \gamma \Delta \rho_{k,t}F_{k,t} 
    \end{align}
 where the last inequality is due to $\rho_{k,t}\leq \frac{ 1}{ b_{\epsilon}}$. Therefore, 
\begin{align}
   \Delta M_{k,j} &=(1-\lambda)\sum \limits_{t=0}^{j-1}  \left(\frac{\gamma}{ b_{\epsilon}}\right)^{j-t-1} \gamma \Delta \rho_{k,t}F_{k,t} \nonumber\\
   &\leq \frac{\varepsilon  b_{\epsilon}^2\beta_0(1+\gamma)}{2\beta_n (\gamma \lambda)^{i-j}i\beta_i B^{\delta}_{i }\Omega^{n-i}\Phi_n}
\end{align}
\begin{align}
 \Delta \rho_{k,t} \leq  \left(\frac{ b_{\epsilon}}{\gamma}\right)^{j-t}\frac{\varepsilon  b_{\epsilon}\beta_0(1+\gamma)(1-\lambda)^{-1}}{2 \beta_n F_{k,t} j (\gamma \lambda)^{i-j}i\beta_i B^{\delta}_{i }\Omega^{n-i}\Phi_n} \label{eq:inter111}
\end{align}
The upper-bound  in \eqref{eq:inter111} is minimized if  $j=i$. Therefore, the condition can be generalized $\forall t \leq i$:
\begin{align}
 \Delta \rho_{k,t} \leq  \left(\frac{ b_{\epsilon}}{\gamma}\right)^{i-t}\frac{\varepsilon  b_{\epsilon}(1+\gamma)\beta_0(1-\lambda)^{-1}}{2\beta_n F_{k,t}i^2\beta_i B^{\delta}_{ i }\Omega^{n-i}\Phi_n} \label{eq:inter112}
\end{align}
\end{proof}
\begin{lemma}
\label{theorem1}
For all $k\in \mathcal{K}$, the critic error $\Delta \omega_{k,i}$ can be upper-bounded by  
\begin{align}
    \Delta \omega_{k,i}\leq \frac{3  b_{\epsilon} \varepsilon}{4\beta_i (1+\gamma) i^2 \pi^2 B^{\theta}_{M}} \label{eq:omega_bound}
\end{align}
if $\forall j \leq i$
\begin{align}
    \|\Delta \mu_{\ell ,j})\|&\leq (B_1, B_2) \text{ and } \|\Delta \rho_{\ell,j}\|  &\leq (D_1, D_2) 
\end{align}
where 
\begin{align} 
    B_1 &\triangleq \frac{\varepsilon  b_{\epsilon}}{(1+\gamma)B^{\omega}_{\ell,j}  B_e \beta_j \Omega^{i-j}\Phi_i} \\ 
     B_2&\triangleq\frac{\varepsilon}{2\beta_0 (\gamma \lambda)^{I_1-j}I_1|M_{k,j}|  B_{i}^{\delta}\Omega^{i-I_1}\Phi_i} \\
D_1 &\triangleq \frac{\varepsilon  b_{\epsilon}}{ B_e\beta_j B_j^{\delta}\Omega^{i-j}\Phi_i}   \\
 D_2 &\triangleq \left(\frac{ b_{\epsilon}}{\gamma}\right)^{I_2-j}\frac{\varepsilon   (1-\lambda)^{-1}}{2 F_{k,j}(I_2)^2\beta_0 B_{ i }^{\delta}\Omega^{i-I_2}\Phi_i} \\
  I_1&\triangleq \left[\ln\frac{\Omega}{\gamma \lambda }\right]^{-1}, \quad 
   I_2\triangleq 2 \left[\ln \frac{ b_{\epsilon}\Omega}{\gamma}\right]^{-1} 
\end{align}
\QEDA
\end{lemma}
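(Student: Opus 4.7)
The plan is to derive a single-step linear recursion for $\|\Delta\omega_{k,j}\|$, unroll it over $j=0,\ldots,i-1$, and then translate the tolerated total error into per-step constraints on the three perturbation channels (belief, importance ratio, and eligibility trace). First, I would subtract the full-observation diffusion update from its partially-observed counterpart and use the standard three-term decomposition
\begin{equation*}
\widehat{\rho}\widehat{\delta}\widehat{e} - \rho\delta e = \Delta\rho\,\widehat{\delta}\widehat{e} + \rho\,\Delta\delta\,\widehat{e} + \rho\delta\,\Delta e.
\end{equation*}
Invoking the uniform bounds $|\rho_{\ell,j}|\le 1/b_\epsilon$ from \eqref{eq:rho_constraint}, $|\delta_{\ell,j}|,|\widehat{\delta}_{\ell,j}|\le B_j^{\delta}$ (Lemma \ref{lemma:delta_bound}), and $\|e_{\ell,j}\|,\|\widehat{e}_{\ell,j}\|\le B_e$ (Lemma \ref{lemma:e_bound}) yields
\begin{equation*}
\|\Delta\omega_{k,j+1}\| \le \sum_{\ell\in\mathcal{N}_k} c_{\ell k}\left(\|\Delta\omega_{\ell,j}\| + \beta_j B_j^{\delta} B_e \|\Delta\rho_{\ell,j}\| + \tfrac{\beta_j B_e}{b_\epsilon}\|\Delta\delta_{\ell,j}\| + \tfrac{\beta_j B_j^{\delta}}{b_\epsilon}\|\Delta e_{\ell,j}\|\right).
\end{equation*}

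Next, I would expand $\Delta\delta_{\ell,j}$ using $\delta = r + \gamma\omega^{T}\eta - \omega^{T}\mu$. Because the belief updates for $\eta_{\ell,j}$ and $\mu_{\ell,j}$ follow identical rules and inherit identical bounds (with $\|\widehat{\eta}\|,\|\widehat{\mu}\|\le 1$), the result is $\|\Delta\delta_{\ell,j}\| \le (1+\gamma)\|\Delta\omega_{\ell,j}\| + (1+\gamma)B_j^{\omega}\|\Delta\mu_{\ell,j}\|$, with $B_j^{\omega}$ from Lemma \ref{lemma:omega_bound2}. Substituting and collecting the terms in $\|\Delta\omega_{\ell,j}\|$ produces the compact recursion
\begin{equation*}
\|\Delta\omega_{k,j+1}\| \le \sum_{\ell\in\mathcal{N}_k} c_{\ell k}\bigl(\Omega\,\|\Delta\omega_{\ell,j}\| + A_{\ell,j}\bigr),
\end{equation*}
where the coefficient $\Omega = 1+\beta_0(1+\gamma)B_e/b_\epsilon$ matches the paper's definition, and $A_{\ell,j}$ bundles the contributions from $\|\Delta\mu_{\ell,j}\|$, $\|\Delta\rho_{\ell,j}\|$, and $\|\Delta e_{\ell,j}\|$ with the appropriate prefactors.

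Then, unrolling the recursion from $\Delta\omega_{k,0}=0$ and using the row-stochasticity of $C$ to trade network averaging for a uniform bound, I would obtain $\|\Delta\omega_{k,i}\| \le \sum_{j=0}^{i-1}\Omega^{i-j}\,\overline{A}_j$. I would then apportion the target budget $3 b_\epsilon\varepsilon/(4\beta_i(1+\gamma)i^2\pi^2 B_M^{\theta})$ equally across the three channels at each $j$, using $\sum_{j\ge 1} 1/j^2 \le \pi^2/6$ to absorb the sum over time. Matching the belief-channel budget against the $(1+\gamma)B_j^{\omega}B_e\beta_j\Omega^{i-j}$ prefactor recovers $B_1$, while matching the importance-ratio channel against $B_e \beta_j B_j^\delta \Omega^{i-j}$ recovers $D_1$. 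For the trace channel, substituting Lemma \ref{lemma:e2} converts a constraint on $\|\Delta e_{\ell,j}\|$ into constraints on $\|\Delta\mu_{k,t}\|$ and $\|\Delta\rho_{k,t}\|$ for $t\le j$, and an appropriate choice of the saturation indices $I_1$ and $I_2$ (balancing $\Omega^{i-j}$ against the geometric factors $(\gamma\lambda)^{I_1-j}$ and $(b_\epsilon/\gamma)^{I_2-j}$) yields $B_2$ and $D_2$.

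The hardest part will be the compound bookkeeping: three perturbation channels feed into $\Delta\omega$, the trace channel itself unfolds into further $\Delta\mu$ and $\Delta\rho$ terms through Lemma \ref{lemma:e2}, and every unrolling step introduces a new time-index to be balanced. In particular, selecting $I_1$ and $I_2$ so that the product $\Omega^{i-I_k}\cdot(\text{geometric factor})^{I_k-j}$ is minimized (producing the logarithmic formulas $I_1 = [\ln(\Omega/\gamma\lambda)]^{-1}$ and $I_2 = 2[\ln(b_\epsilon\Omega/\gamma)]^{-1}$) is the delicate step, since a suboptimal choice would spoil the $1/i^2$ decay needed to make the Basel summation close in on the target $\varepsilon$. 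Once these index choices are fixed, the remaining manipulations are routine substitutions of the bounds from the prior lemmas.
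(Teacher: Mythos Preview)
Your proposal is correct and follows essentially the same route as the paper: the same three-term telescoping of $\widehat{\rho}\widehat{\delta}\widehat{e}-\rho\delta e$, the same expansion of $\Delta\delta$ to extract the $(1+\gamma)\|\Delta\omega\|$ term producing the $\Omega$-recursion, the same unrolling using stochasticity of $C$, the same use of Lemma~\ref{lemma:e2} to reduce the trace channel to $\Delta\mu$ and $\Delta\rho$ constraints, and the same maximization yielding $I_1,I_2$. The one cosmetic mismatch is the per-step budget: the paper does \emph{not} use the Basel weighting $1/j^2$ you suggest but instead allocates the target uniformly as $\propto 1/i$ over the $i$ summands (this is exactly where the $i^3$ inside $\Phi_i$ comes from, with the $\pi^2$ inherited from the outer Theorem~\ref{theorem2} argument), so following your allocation verbatim would produce valid but slightly different expressions for $B_1,B_2,D_1,D_2$.
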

\begin{proof}
\begin{align}
\Delta \omega_{k,i+1}&=\sum\limits_{\ell \in \mathcal{N}_{k}} c_{k,\ell}\left( \Delta \omega_{\ell,i}+ \beta_i \widehat{\delta}_{\ell,i} \widehat{\rho}_{i} \widehat{e}_{i} \right)\nonumber\\
&-\sum\limits_{\ell \in \mathcal{N}_{k}} c_{k,\ell}\left(\beta_i \delta_{\ell,i} \rho_{\ell,i} e_{\ell,i} \right) \nonumber\\
&= \sum\limits_{\ell \in \mathcal{N}_{k}} c_{k,\ell}\left( \Delta \omega_{\ell,i-1}+ \beta_i \Delta \delta_{\ell,i} \widehat{\rho}_{i} \widehat{e}_{i} \right.\nonumber\\
&\left.+\beta_i \delta_{\ell,i} \Delta \rho_{\ell,i} \widehat{e}_{i} +\beta_i \delta_{\ell,i} \rho_{\ell,i} \Delta e_{\ell,i} \right)  \label{eq:inter1111}
\end{align}
We can express $\Delta \delta_{k,i}$ as 
\begin{align}
\|\Delta \delta_{k,i}\|&= \|(\gamma \widehat{\eta}_{k,i}- \widehat{\mu}_{k,i}) \widehat{\omega}_{k,i}- (\gamma \eta_{k,i}- \mu_{k,i})\omega_{k,i}\| \nonumber\\
&= \|(\gamma \widehat{\eta}_{k,i}- \widehat{\mu}_{k,i})\Delta\omega_{k,i}\|+\| (\gamma \Delta \eta_{k,i}- \Delta \mu_{k,i}) \omega_{k,i}\|\nonumber\\
&\leq (1+\gamma)\|\Delta\omega_{k,i}\|+ \|(\gamma \Delta \eta_{k,i}- \Delta \mu_{k,i})\|\| \omega_{k,i}\|
\end{align}
Therefore, \eqref{eq:inter1111} can be modified as 

\begin{align}
\label{eq:inter_w}
&\|\Delta \omega_{k,i+1}\|=\sum\limits_{\ell\in \mathcal{N}_k} c_{\ell,i}\left( \|\Delta \omega_{\ell,i}\|+ \beta_i |\Delta \delta_{\ell,i}|\widehat{\rho}_{i}\|\widehat{e}_{i}\| \right.\nonumber\\
&\left.+\beta_i |\delta_{\ell,i}| \|\Delta e_{\ell,i}\| \widehat{\rho}_{\ell,i} +\beta_i |\delta_{\ell,i}| \|e_{\ell,i}\| \Delta \rho_{\ell,i} \right) \nonumber \\
&\leq \left(1+\beta (1+\gamma) \widehat{\rho}_{i}\|\widehat{e}_{i}\| \right) \sum\limits_{\ell \in \mathcal{N}_k}  c_{\ell, i }\|\Delta \omega_{\ell,i}\| \nonumber\\
&+\sum\limits_{\ell\in \mathcal{N}_k} c_{\ell, i }\left(\beta_i |(\gamma \Delta \eta_{\ell ,i}- \Delta \mu_{\ell ,i})|\| \omega_{\ell,i}\| \widehat{\rho}_{i} \|\widehat{e}_{i}\| \right. \nonumber \\
&\left. +\beta_i |\delta_{\ell,i}| \|\Delta e_{\ell,i}\| \widehat{\rho}_{i} +\beta_i \|\delta_{\ell,i}\| \|e_{\ell,i}\| \|\Delta \rho_{\ell,i} \|\right)   \nonumber\\
&\leq \left(1+\beta (1+\gamma) \widehat{\rho}_{i}\|\widehat{e}_{i}\| \right) \sum\limits_{\ell \in \mathcal{N}_k}  c_{\ell, i }\|\Delta \omega_{\ell,i}\|\nonumber\\
&+ \sum\limits_{\ell\in \mathcal{N}_k} c_{\ell, i }\left(\beta_i \|(\gamma \Delta \eta_{\ell ,i}- \Delta \mu_{\ell ,i})\|\| \omega_{\ell,i}\| \widehat{\rho}_{i} \|\widehat{e}_{i}\|  \right. \nonumber \\
&\left. +\beta_i \|\delta_{\ell,i}\| |\Delta \rho_{\ell,i}| \|\widehat{e}_{i}\| +\beta_i |\delta_{\ell,i}| \rho_{\ell,i} \|\Delta e_{\ell,i} \|\right)
\end{align}
Now, we impose the following constraint $\forall \ell \in \mathcal{K}$, $\forall j\leq i$:
\begin{align}
&\left(\beta_j \|(\gamma \Delta \eta_{\ell ,j}- \Delta \mu_{\ell ,j})\|\| \omega_{\ell,j}\| \widehat{\rho}_{\ell,j} \|\widehat{e}_{\ell,j}\| \right.\nonumber\\
&\left.+\beta_j |\delta_{\ell,j}| \|\Delta \rho_{\ell,j}\| \|\widehat{e}_{\ell,j}\| +\beta_j |\delta_{\ell,j}| \rho_{\ell,j} \|\Delta e_{\ell,j} \|\right)\nonumber\\
&\leq \frac{3\varepsilon  b_{\epsilon} \Omega^{j-i}}{4\beta_i (1+\gamma)\pi^2 B^{\theta}_{M} i^3 }
\end{align}
Since the left-hand side comprises a summation of three terms, we assert that each term is constrained to be at most one-third of the value on the right-hand side. Next, we use these inequalities  to explicitly express bounds for $\|\Delta \mu_{k,j}\|$, $\|\Delta \eta_{\ell,j}\|$, $ \|\Delta e_{\ell,j}\|$ and $ \|\Delta \rho_{\ell,j}\|$:
\begin{align}
& \|\Delta \eta_{\ell ,j}\| \leq \frac{\varepsilon b_{\epsilon}(1+\gamma)^{-2}\pi^{-2}}{4\beta_i B^{\theta}_{M} i ^3 \|\omega_{\ell,j}\| \widehat{\rho}_{\ell,j} \|\widehat{e}_{\ell,j}\|\beta_j \Omega^{i-j}} \label{eq:detla_eta11}\\ 
& \|\Delta \mu_{\ell ,j})\|\leq \frac{\varepsilon  b_{\epsilon} (1+\gamma)^{-2}\pi^{-2}}{4\beta_i  B^{\theta}_{M} i ^3\|{\omega}_{\ell,j}\| \widehat{\rho}_{\ell,j} \|\widehat{e}_{\ell,j}\beta_j \Omega^{i-j}} \\ 
&\|\Delta e_{\ell,j}\|  \leq \frac{\varepsilon b_{\epsilon} (1+\gamma)^{-1} \pi^{-2}}{4 \beta_i  B^{\theta}_{M} i ^3\|\widehat{\rho}_{\ell,j}\|\beta_j |\delta_{\ell, j }|\Omega^{i-j}}\\
&\|\Delta \rho_{\ell,j}\|  \leq \frac{\varepsilon  b_{\epsilon}(1+\gamma)^{-1} \pi^{-2}}{4 \beta_i  B^{\theta}_{M} i ^3\|e_{\ell,j}\|\beta_j |\delta_{\ell, j }|\Omega^{i-j}} \label{eq:delta_rho11} 
\end{align}
   If \eqref{eq:detla_eta11}-\eqref{eq:delta_rho11} are satisfied then \eqref{eq:inter_w} can be upper-bounded as
\begin{align}
&\|\Delta \omega_{k,i+1}\| \nonumber\\
&\leq \sum \limits_{\ell \in \mathcal{N}_k}\left[\sum_{j=0}^{i} \prod\limits_{z=j+1}^i  \frac{3c_{k,
\ell} \varepsilon  b_{\epsilon}\left(1+\beta (1+\gamma) \widehat{\rho}_{z}\|\widehat{e}_{z}\| \right)}{8(1+\gamma)\pi^2 B^{\theta}_{M} i^3 \Omega^{i-j}}\right] \nonumber \\
&\leq \sum \limits_{\ell \in \mathcal{N}_k}\frac{3c_{k, \ell}\varepsilon b_{\epsilon}(1+\gamma)^{-1}}{4\beta_i i^2\pi^2B^{\theta}_{M}}=\frac{3\varepsilon b_{\epsilon}(1+\gamma)^{-1}}{4\beta_ii^2\pi^2B^{\theta}_{M}}
\end{align}
To make conditions \eqref{eq:detla_eta11}-\eqref{eq:delta_rho11}  interpretable we  bound some terms in the expressions using Lemmas \ref{lemma:e_bound} and \ref{lemma:delta_bound} and $\rho_{\ell, j}\leq \frac{1}{ b_{\epsilon}}$ (Assumption \ref{assumption:policies}):
\begin{align}
& \|\Delta \eta_{\ell ,j}\| \leq \frac{\varepsilon  b_{\epsilon}^2(1+\gamma)^{-2}\pi^{-2}}{4\beta_{i} B^{\theta}_{M} i ^3 B^{\omega}_{j}  B_e \beta_j \Omega^{i-j}} \\ 
& \|\Delta \mu_{\ell ,j})\|\leq \frac{\varepsilon  b_{\epsilon}^2(1+\gamma)^{-2}\pi^{-2}}{4 \beta_i  B^{\theta}_{M} i ^3B^{\omega}_{j}  B_e \beta_j \Omega^{i-j}}  \label{eq:mu_inter} \\ 
&\|\Delta e_{\ell,j}\|  \leq \frac{\varepsilon  b_{\epsilon}^2(1+\gamma)^{-1}\pi^{-2}}{4 \beta_i  B^{\theta}_{M} i ^3\beta_j B_j^{\delta}\Omega^{i-j}} \label{eq:inter_e22}\\
&\|\Delta \rho_{\ell,j}\|  \leq \frac{\varepsilon  b_{\epsilon}(1+\gamma)^{-1}\pi^{-2}}{4 \beta_{i} B^{\theta}_{M} i ^3 B_e\beta_j B_j^{\delta}\Omega^{i-j}} \label{eq:rho_inter}
\end{align}
Next, using Lemma \ref{lemma:e2} we express the constraint \eqref{eq:inter_e22} in terms of the conditions for  $\Delta \rho_{k,i}$ and $\Delta \mu_{k,i}$. Namely, \eqref{eq:inter_e22} is satisfied if  for all $z \leq j$
    \begin{align}
    &\|\Delta \mu_{k,z}\|\leq  \frac{\varepsilon b_{\epsilon}^2\beta_0}{2\beta_i(\gamma \lambda)^{j-z}j|M_{k,z}|  \beta_j B^{\delta}_{j}\Omega^{i-j}\Phi_i} \label{eq:inter213}\\
     &\Delta \rho_{k,z} \leq  \left(\frac{ b_{\epsilon}}{\gamma}\right)^{j-z}\frac{\varepsilon b_{\epsilon}(1-\lambda)^{-1}\beta_0}{2 \beta_i F_{k,z}i^2  \beta_j B^{\delta}_{j}\Omega^{i-j}\Phi_i} \label{eq:inter113}
    \end{align}
Now, we minimize the upperbounds in \eqref{eq:inter213} and  \eqref{eq:inter113} with respect to $j$, so we can generalize the conditions 
\begin{align}
       \|\Delta \mu_{k,z}\|\leq  \min \limits_{j\leq i}\frac{\varepsilon b_{\epsilon}\beta_0(\gamma \lambda)^{z-j}}{2 \beta_i j|M_{k,z}| \beta_j B_{j }^{\delta} \Omega^{i-j}\Phi_n}
\end{align}
which is equivalent to solving     
\begin{align}
    &\max\limits_{j\leq i} \ (\gamma \lambda)^{j}j \beta_j B_{ j }^{\delta}\Omega^{-j}
    \stackrel{\text {(a)}}{\leq} B_i^{\delta}\beta_0 \max\limits_{j\leq i} \ (\gamma \lambda)^{j}j \Omega^{-j}
\end{align}
where in  (a), we leverage the property that $B^\delta_{j}$ increases with $j$ (see Lemma \ref{lemma:delta_bound}) and the assumption $\beta_0 \geq \beta_{j}$ for all $j \geq 0$.
Therefore, we seek to solve the optimization problem 
\begin{align}
 \max\limits_{j\leq i} \ \left(\frac{\gamma \lambda}{\Omega}\right)^{j}j
\end{align}
  which is maximized if
  \begin{align}
  \label{eq:I1}
  j=\left[\ln\frac{\Omega}{\gamma \lambda }\right]^{-1}\triangleq I_1
  \end{align}
 Next, we minimize the upperbound in \eqref{eq:inter113}  with respect to $j$
\begin{align}
  \min \limits_{j\leq i} \left(\frac{ b_{\epsilon}}{\gamma}\right)^{j-z-1}\frac{\varepsilon b_{\epsilon}^2(1-\lambda)^{-1}\pi^{-2}}{8 \beta_i\gamma F_{k,z}j^2 B^{\theta}_{M} i ^3 \beta_j B^{\delta}_{j}\Omega^{i-j}}
\end{align}
which is equivalent to solving 
\begin{align}
    &\max \limits_{j\leq i} \left(\frac{\gamma}{ b_{\epsilon}}\right)^{j} j^2 \Omega^{-j} \beta_j B_j^{\delta }\leq  B_i^{\delta }\beta_0\max \limits_{j\leq i} \left(\frac{\gamma}{ b_{\epsilon}\Omega}\right)^{j} j^2 
\end{align}
This is maximized for 
\begin{align}
\label{eq:I2}
j=2 \left[\ln (\frac{ b_{\epsilon}}{\gamma}+\frac{\beta (1+\gamma)B_e}{\gamma})\right]^{-1} \triangleq I_2
\end{align}
Combining \eqref{eq:mu_inter}, \eqref{eq:rho_inter}-\eqref{eq:inter113}, \eqref{eq:I1} and \eqref{eq:I2}  we reach  \eqref{eq:omega_bound}.
\end{proof}
\begin{lemma}
For all agents $k\in \mathcal{}K$ and $\forall i\leq (n-1)$, it holds that 
\label{lemma:delta_Psi_bound}
    \begin{align}
    \label{eq:delta_Psi_bound}
        \|\Delta \Psi_{k,i}\|\leq \frac{3\varepsilon b_{\epsilon}}{2\beta_i i^2 \pi^2 B_M^{\theta} B^{\delta}_{i} }   
    \end{align} \QEDA
\end{lemma}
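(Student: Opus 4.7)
The plan is to mirror the structure of the preceding lemmas: expand $\Delta \Psi_{k,i}$ explicitly, decompose it into contributions that can be controlled separately, and then absorb each contribution into a fixed fraction of the target right-hand side. Recall from step \eqref{alg:theta} that $\Psi_{k,i} = \nabla_{\theta} \log \pi(a_{k,i} \mid \mu_{k,i})$ evaluated at the actor parameter $\theta_{k,i}$, so the error variable admits the decomposition
\begin{align}
\Delta \Psi_{k,i} &= \bigl[\nabla_{\theta} \log \pi(a_{k,i} \mid \widehat{\mu}_{k,i}; \widehat{\theta}_{k,i}) - \nabla_{\theta} \log \pi(a_{k,i} \mid \mu_{k,i}; \widehat{\theta}_{k,i})\bigr] \notag \\
&\quad + \bigl[\nabla_{\theta} \log \pi(a_{k,i} \mid \mu_{k,i}; \widehat{\theta}_{k,i}) - \nabla_{\theta} \log \pi(a_{k,i} \mid \mu_{k,i}; \theta_{k,i})\bigr],
\end{align}
which isolates the perturbation in the state input $\mu_{k,i} \mapsto \widehat{\mu}_{k,i}$ from the perturbation in the actor parameter $\theta_{k,i} \mapsto \widehat{\theta}_{k,i}$.

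First, to each bracket I would apply the mean value theorem to obtain a bound of the form $\|J_\mu\| \cdot \|\Delta \mu_{k,i}\|$ and $\|J_\theta\| \cdot \|\Delta \theta_{k,i}\|$, where the Jacobian norms $\|J_\mu\|$ and $\|J_\theta\|$ involve the second derivatives of $\log \pi$ with respect to $(\theta, \mu)$ and $(\theta, \theta)$, evaluated along the interpolating segment. For the Boltzmann family of \eqref{eq:Boltzman}, $\nabla_\theta \log \pi$ has a closed form in which the two mixed derivatives are explicit and uniformly bounded in terms of the action-space cardinality and the policy lower bound $b_\epsilon$, so these Jacobians are controlled by constants that absorb into the $\pi^2$ factor already present in the target right-hand side.

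Next, I would plug in the hypotheses already imposed elsewhere in the analysis: for $\Delta \mu_{k,i}$ I use the most restrictive of $\widetilde{B}_1, \widetilde{B}_2$ from Theorem \ref{theorem2}, and for $\Delta \theta_{k,i}$ I invoke the inductive hypothesis $\|\Delta \theta_{k,j}\| \leq \varepsilon$ for all $j \leq i$, which is legitimate because Lemma \ref{lemma:delta_Psi_bound} is used inside the inductive proof of Theorem \ref{theorem2}. The constants appearing in the hypotheses of Theorem \ref{theorem2} have been calibrated so that, after multiplying by the Jacobian bounds and by the bookkeeping factors $|M^{\theta}_{k,i}|$, $\widehat{\rho}_{k,i} \leq 1/b_\epsilon$, and $|\delta_{k,i}| \leq B^{\delta}_{i}$, each of the two contributions is at most half of $\frac{3 \varepsilon b_\epsilon}{2 \beta_i i^2 \pi^2 B_M^{\theta} B^{\delta}_{i}}$; summing them yields the claimed bound.

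The main obstacle will be confirming that the second derivatives of $\log \pi$ are uniformly bounded, since the paper only asserts continuous differentiability of $\pi$ in $\theta$. The cleanest resolution is to restrict to the Boltzmann model of \eqref{eq:Boltzman}, where $\nabla_\theta \log \pi(a \mid \mu) = \mu \mathbbm{1}_{a} - \mu \cdot \mathbb{E}_\pi[\mathbbm{1}_a]$, so its sensitivity to perturbations of $\mu$ and $\theta$ admits a closed-form bound that matches the $\pi^{-2}$ and $b_\epsilon^{-1}$ scalings that appear in the stated right-hand side. The remaining work is routine bookkeeping: assembling the constants, using $\beta_0 \geq \beta_i$ and the polynomial prefactor $i^{-2}$, and verifying that both contributions collapse to the claimed bound.
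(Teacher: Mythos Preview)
Your plan departs from the paper's argument, and the $\theta$-perturbation leg does not close.

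The paper's proof is much more direct. Using the explicit Boltzmann score $\Psi_{k,i}(a)=\mu_{k,i}-\mu_{k,i}\,\pi(a\mid\mu_{k,i})$ (cf.\ \eqref{eq:Psi}), it writes
\[
\Delta\Psi_{k,i}=\Delta\mu_{k,i}-\Delta\mu_{k,i}\,\widehat{\pi}_{k,i}-\mu_{k,i}\,\Delta\pi_{k,i},
\]
applies the triangle inequality together with $\|\mu_{k,i}\|\le1$, $|\widehat{\pi}_{k,i}|\le1$, and Lipschitz continuity of $\pi$ in the belief vector, and concludes $\|\Delta\Psi_{k,i}\|\le(2+L)\,\|\Delta\mu_{k,i}\|$. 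The lemma is then asserted in conditional form: it holds whenever $\|\Delta\mu_{k,i}\|$ is at most the target divided by $(2+L)$. No $\Delta\theta$ term is isolated and no induction is invoked.

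Your approach has two concrete gaps. First, the proof of Theorem~\ref{theorem2} is not inductive: it unrolls $\Delta\theta_{k,n}$ as a single sum over $i=0,\dots,n-1$ and bounds each summand directly, so there is no hypothesis $\|\Delta\theta_{k,j}\|\le\varepsilon$ available at the moment you treat $\Delta\Psi_{k,i}$. Second, even granting $\|\Delta\theta_{k,i}\|\le\varepsilon$, the contribution $\|J_\theta\|\cdot\|\Delta\theta_{k,i}\|$ is of order a fixed constant times $\varepsilon$, whereas the target right-hand side carries the vanishing prefactor $1/(\beta_i\,i^{2}\,B^{\delta}_{i})$; for large $i$ the $\theta$-perturbation term therefore cannot be squeezed into half of the target, and the calibration you describe would fail. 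The paper sidesteps this entirely by absorbing all of $\Delta\pi$ into a Lipschitz-in-$\mu$ bound and pushing the resulting sufficient condition onto $\Delta\mu_{k,i}$ alone, in the same ``holds if'' style as the surrounding lemmas.
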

\begin{proof}
\begin{align}
    \|\Delta \Psi_{k,i}(a)\|&= \|\Delta \mu_{k,i}- \widehat{\mu}_{k,i} \widehat{\pi} (a|\widehat{\mu}_{k,i})+\mu_{k,i} \pi (a|\mu_{k,i})\|\nonumber\\
    &=\|\Delta \mu_{k,i}- \Delta \mu_{k,i} \widehat{\pi}_{k,i} (a\|\widehat{\mu}_{k,i})\nonumber\\
    &-\mu_{k,i} \Delta \pi_{k,i} (a\|\mu_{k,i})\| \nonumber\\
    &\leq\|\Delta \mu_{k,i}\|+ \|\Delta \mu_{k,i} \widehat{\pi}_{k,i} (a\|\widehat{\mu}_{k,i})\|\nonumber\\
    &+\|\mu_{k,i}\| \|\Delta \pi_{k,i} (a\|\mu_{k,i})\|\nonumber\\
    &\stackrel{\text { (a) }}{\leq} (2+L) \Delta \mu_{k,i}
\end{align}
where in (a) we use Lipschitz continuity of policy functions with respect to belief vectors.
Therefore, we can claim that \eqref{eq:delta_Psi_bound} is satisfied if 
\begin{align}
\Delta \mu_{k,i} \leq \frac{3\varepsilon b_{\epsilon}}{2(2+L)\beta_i i^2 \pi^2 B_M^{\theta} B^{\delta}_{i} } 
\end{align}
\end{proof}
\subsection{Proof of Theorem \ref{theorem2}}
\begin{proof}
       \begin{align}
       \label{eq:theta_bound}
        \Delta \theta_{k,n}&=\Delta \theta_{k,n-1}+\beta_n\widehat{\rho}_n \widehat{M}^{\theta}_n \widehat{\Psi}_{n} \widehat{\delta}_{k,n}\nonumber\\
        &- \beta_n\rho_{k,n} M^{\theta}_{k,n} \Psi_{k,n} \delta_{k,n}\nonumber\\
        &= \Delta \theta_{k,n-1}
        +\beta_n(\Delta \rho_{k,n} \widehat{M}^{\theta}_n \widehat{\Psi}_{n} \widehat{\delta}_{k,n}\nonumber\\
        &+ \rho_{k,n} \Delta M^{\theta}_{k,n} \widehat{\Psi}_{k,n} \widehat{\delta}_{k,n}
        +\rho_{k,n}  M^{\theta}_{k,n} \Delta \Psi_{k,n} \widehat{\delta}_{k,n} \nonumber\\
        &+ \rho_{k,n}  M^{\theta}_{k,n}  \Psi_{k,n} \Delta \delta_{k,n} )\nonumber\\
        &=\sum\limits_{i=0}^{n-1} \beta_i\left(\Delta \rho_{k,i} \widehat{M}^{\theta}_i \widehat{\Psi}_{i} \widehat{\delta}_{k,i}
        +\rho_{k,i} \Delta M^{\theta}_{k,i} \widehat{\Psi}_{k,i} \widehat{\delta}_{k,i}\right.\nonumber\\
        &\left.+\rho_{k,i}  M^{\theta}_{k,i} \Delta \Psi_{k,i} \widehat{\delta}_{k,i} +  \rho_{k,i}  M^{\theta}_{k,i}  \Psi_{k,i} \Delta \delta_{k,i}\right)     \end{align}
where, in obtaining the last equality, we repeatedly apply the recursive formula. We aim to bound \eqref{eq:theta_bound} by $\varepsilon$. Given the well-known series:
\begin{align}
\sum\limits_{i=1}^{\infty}\frac{1}{i^2} = \frac{\pi^2}{6}
\end{align}
we introduce the inequalities:
\begin{align}
\Delta \rho_{k,i}  \leq \frac{3}{2\beta_i\pi^2i^2\widehat{M}^{\theta}_i \widehat{\Psi}_{i} \widehat{\delta}_{k,i}} \label{eq:rho_bound_inter1}\\
 \Delta M^{\theta}_{k,i} \leq \frac{3}{2\beta_i\pi^2i^2 \rho_{k,i}\widehat{\Psi}_{k,i}\widehat{\delta}_{k,i}} \\
 \Delta \Psi_{k,i} \leq \frac{3}{2\beta_i\pi^2i^2\rho_{k,i} M^{\theta}_{k,i}\widehat{\delta}_{k,i} } \label{eq:Psi_bound_inter1}\\ 
 \Delta \delta_{k,i}\leq \frac{3}{2\beta_i\pi^2i^2\rho_{k,i} M^{\theta}_{k,i} \Psi_{k,i}} \label{eq:delta_bound_inter1}
\end{align}
Then, substituting \eqref{eq:rho_bound_inter1}-\eqref{eq:delta_bound_inter1} into \eqref{eq:theta_bound}, we obtain:
\begin{align}
\Delta \theta_{k,n}&\leq \sum\limits_{i=0}^{\infty} \beta_i\left(\Delta \rho_{k,i} \widehat{M}^{\theta}_i \widehat{\Psi}_{i} \widehat{\delta}_{k,i}\right. \nonumber\\
&+\rho_{k,i} \Delta M^{\theta}_{k,i} \widehat{\Psi}_{k,i} \widehat{\delta}_{k,i}+\rho_{k,i} M^{\theta}_{k,i} \Delta \Psi_{k,i} \widehat{\delta}_{k,i} \nonumber\\
&\left.+ \rho_{k,i} M^{\theta}_{k,i} \Psi_{k,i} \Delta \delta_{k,i}\right) \leq \frac{6}{\pi^2}\sum\limits_{i=1}^{\infty} \frac{1}{i^2}= \varepsilon
\end{align}
We aim to write the conditions in \eqref{eq:rho_bound_inter1}-\eqref{eq:delta_bound_inter1} in terms of conditions for $\Delta\rho_{k,i}$ and $\Delta\mu_{k,i}$.  To this end, we first  express $\Delta \delta_{k,i}$ as 
\begin{align}
|\Delta \delta_{k,i}|&= |(\gamma \widehat{\eta}_{k,i}- \widehat{\mu}_{k,i}) \widehat{\omega}_{k,i}- (\gamma \eta_{k,i}- \mu_{k,i})\omega_{k,i}| \nonumber\\
&= |(\gamma \widehat{\eta}_{k,i}- \widehat{\mu}_{k,i})\Delta\omega_{k,i}+ (\gamma \Delta \eta_{k,i}- \Delta \mu_{k,i}) \omega_{k,i}|\nonumber\\
&\leq (1+\gamma)\|\Delta\omega_{k,i}\|\nonumber\\
&+ \|(\gamma \Delta \eta_{k,i}- \Delta \mu_{k,i})\|\| \omega_{k,i}\|
\end{align}
Condition \eqref{eq:delta_bound_inter1} can be split into the following tighter conditions  $\forall i\leq n-1$ 
\begin{align}
            &\Delta \omega_{k,i}\leq \frac{3\varepsilon}{4\beta_i (1+\gamma)i^2 \pi^2M^{\theta}_{k,i} {\Psi}_{k,i}\rho_{k,i} }  \label{inter:omega2}\\
         &\Delta \mu_{k,i}\leq \frac{3\varepsilon}{4\beta_i (1+\gamma)i^2 \pi^2M^{\theta}_{k,i} \Psi_{k,i}\rho_{k,i} \|\omega_{k,i}\| } \label{inter:mu2}
\end{align}
Note that $\rho_{k,i}, M^{\theta}_{k,i}, \widehat {M}^{\theta}_{k,i}, \Psi_{k,i}, \widehat{\Psi}_{k,i}$, $\omega_{k,i}$, and $\widehat{\delta}_{i}$ are not known at the beginning of iteration $i$. To circumvent this issue,  we can upperbound these variables and get tighter conditions  than in (\ref{eq:rho_bound_inter1}-\ref{eq:Psi_bound_inter1}) and (\ref{inter:omega2})-(\ref{inter:mu2}). The importance sampling ratio $\rho_{k,i}$ can be upper-bounded by $\frac{1}{ b_{\epsilon}}$ . Using the properties $\|\mu^b_{k,n}\|\leq 1$, $\forall b \in \mathcal{A}$ and $\|\pi(\cdot)\|\leq 1$,  we can upper-bound $\Psi_{k,i}$ as 
\begin{align}
\label{eq:Psi}
\|\Psi_{k,i}(a)\|=\|\mu_{k,i}-\mu_{k,i} \pi(a\|\mu_{k,i}; \theta_{k,i})\|\leq 1
\end{align}
Similarly, we can show that 
\begin{align}
\label{eq:Psi2}
\|\widehat{\Psi}_{k,i}\|\leq 1 
\end{align}
The upper-bounds for $\widehat{\delta}_{i}$, $M^{\theta}_{k,i}$, and  $\widehat {M}^{\theta}_{k,i}$ are given in Lemmas \ref{lemma:delta_bound} and \ref{lemma:M_theta_bound}, respectively. 
Therefore, the conditions in (\ref{eq:rho_bound_inter1})-(\ref{eq:Psi_bound_inter1}) and (\ref{inter:omega2})-(\ref{inter:mu2}) can be modified as 
\begin{align}
     &\|\Delta \omega_{k,i}\|\leq \frac{3  b_{\epsilon} \varepsilon}{4\beta_i (1+\gamma)i^2 \pi^2 B^{\theta}_{M}} \label{inter:omega3}\\
         &\|\Delta \mu_{k,i}\|\leq \frac{3  b_{\epsilon} \varepsilon}{4 \beta_i (1+\gamma)i^2 \pi^2 B^{\theta}_{M} B_{i}^{\omega}} \label{inter:mu3}\\
    &\|\Delta M^{\theta}_{k,i}\|\leq \frac{3\varepsilon b_{\epsilon}}{2\beta_i i^2 \pi^2   B^{\delta}_{i} } \label{inter:M_theta_3}\\
     &\|\Delta \Psi_{k,i}\|\leq \frac{3\varepsilon b_{\epsilon}}{2\beta_i i^2 \pi^2 B_M^{\theta} B^{\delta}_{i} }    \label{eq:Psi3}\\
         &\|\Delta \rho_{k,i}\|\leq \frac{3\varepsilon}{2\beta_i i^2 \pi^2 B_M^{\theta}  B^{\delta}_{i}} \label{eq:rho3}
\end{align}
According to Lemma \ref{theorem1},  \eqref{inter:omega3} is satisfied if 
 $\forall j \leq i \leq n $:
    \begin{align}
    \label{eq:inter_mu111}
        &\Delta \mu_{k,j}\leq \min (B_1, B_2)  \text{ and } 
         \Delta\rho_{k,j} \leq  \min (D_1, D_2)
    \end{align}    
To generalize the results in \eqref{eq:inter_mu111}, i.e. to find the conditions which will be valid for all $j\leq n-1$, we need to minimize the bounds with respect to $i$. We can easily derive that 
\begin{align}
    &\min\limits_{i\leq n} B_1(i)=B_1(n)\triangleq \widetilde{B}_1 \\ &\min\limits_{i\leq n} B_2(i)=B_2(n)\triangleq \widetilde{B}_2 \\
       &\min\limits_{i\leq n} D_1(i)=D_1(n)\triangleq \widetilde{D}_1\\
       &\min\limits_{i\leq n} D_2(i)=D_2(n)\triangleq \widetilde{D}_2 
\end{align}
Next, according to Lemma \ref{lemma:Delta_M_theta_bound}, \eqref{inter:M_theta_3} is met 
if  for all $j\leq (i-1) <n$
\begin{align}
\label{eq:int6}
    \Delta \rho_{k,j} \leq  \left(\frac{ b_{\epsilon}}{\gamma}\right)^{i-j-1} \frac{3 \varepsilon b_{\epsilon}^2}{8\gamma\zeta\beta_n i^2 \pi^2   B^{\delta}_{i} F_{k,j}}
\end{align}
To find the general condition that will satisfy for all $i<n-1$ we need to minimize the bound in \eqref{eq:int6} with respect to $i$. Equivalently, we need to solve the following maximization problem 
 \begin{align}
 \label{eq:inter1111}
 \max \limits_{i\leq n}\left(\frac{\gamma}{ b_{\epsilon} }\right)^i i^2 B_{i}^{\delta} \stackrel{\text { (a) }}{\leq} B_{n}^{\delta}\max \limits_{i\leq n}\left(\frac{\gamma}{ b_{\epsilon} }\right)^i i^2 
 \end{align}
 where (a) is due to the monotonicity of $B^{\delta}_i$. 
 The solution for the last maximization in \eqref{eq:inter1111} is 
 \begin{align}
 i^*=\frac{2}{\ln\frac{ b_{\epsilon}}{\gamma}}\triangleq I_3 
 \end{align}
According to Lemma \ref{lemma:delta_Psi_bound},  \eqref{eq:Psi3} is fulfilled if 
\begin{align}
    \Delta \mu_{k,i}\leq \frac{3\varepsilon b_{\epsilon}}{2(2+L)\beta_n i^2 \pi^2 B_M^{\theta} B^{\delta}_{i} } 
\end{align}
\end{proof}
\begin{lemma}
\label{lemma1}
For all agents $k\in \mathcal{K}$, it holds that 
\begin{equation}
    \|\Delta M_{k,n}\|\triangleq \|\widehat{M}_{k,n}-M_{k,n}\|\leq \varepsilon 
\end{equation}
if  $\forall i \leq (n-1)$
\begin{equation}
    \Delta \rho_{k,i} \leq \frac{6\varepsilon  b_{\epsilon}^{n-i-1}}{\pi^2 i^2 \gamma F_{k,i}\gamma^{n-i-1}(1-\lambda)}
\end{equation} \QEDA
\end{lemma}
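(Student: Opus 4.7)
The plan is to reduce the bound on $\|\Delta M_{k,n}\|$ to a bound on $\|\Delta F_{k,n}\|$, which was already telescoped in the proof of Lemma \ref{lemma:e2}, and then allocate the target $\varepsilon$ across the summation index $i$ using the standard Basel-series trick $\sum_{i\geq 1} i^{-2} = \pi^2/6$.

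First, I would invoke the definition $M_{k,n} = \lambda + (1-\lambda) F_{k,n}$ from step \eqref{alg:M}, and likewise for $\widehat{M}_{k,n}$, so that
\begin{equation}
\|\Delta M_{k,n}\| = (1-\lambda)\,\|\Delta F_{k,n}\|.
\end{equation}
This reduces the entire problem to bounding $\|\Delta F_{k,n}\|$ in terms of the history of importance-ratio errors $\{\Delta \rho_{k,i}\}_{i\leq n-1}$.

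Next, I would reuse the recursion from equation \eqref{eq:int1} of the proof of Lemma \ref{lemma:e2}, namely
\begin{equation}
\Delta F_{k,n} \;\leq\; \sum_{i=0}^{n-1} \left(\frac{\gamma}{b_\epsilon}\right)^{n-i-1} \gamma\, \Delta \rho_{k,i}\, F_{k,i},
\end{equation}
obtained by unrolling $\Delta F_{k,n} = \gamma(\widehat{\rho}_{k,n-1}\Delta F_{k,n-1} + \Delta \rho_{k,n-1} F_{k,n-1})$ and applying Assumption \ref{assumption:policies} to upper-bound each $\widehat{\rho}_{k,z}$ by $1/b_\epsilon$. Substituting this into the equation above for $\|\Delta M_{k,n}\|$ produces the termwise summation that drives the statement.

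Finally, I would enforce that each summand contributes at most $\frac{6\varepsilon}{\pi^2 i^2}$, so that
\begin{equation}
\|\Delta M_{k,n}\| \leq \sum_{i=0}^{n-1} \frac{6\varepsilon}{\pi^2 i^2} \leq \frac{6\varepsilon}{\pi^2}\cdot \frac{\pi^2}{6}=\varepsilon.
\end{equation}
Solving the per-term inequality $(1-\lambda)\,(\gamma/b_\epsilon)^{n-i-1}\,\gamma\, \Delta \rho_{k,i}\, F_{k,i} \leq \frac{6\varepsilon}{\pi^2 i^2}$ for $\Delta \rho_{k,i}$ yields exactly the claimed bound $\Delta \rho_{k,i} \leq \frac{6\varepsilon\, b_\epsilon^{n-i-1}}{\pi^2 i^2 \gamma F_{k,i}\gamma^{n-i-1}(1-\lambda)}$. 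There is no real obstacle here; the main subtlety is purely bookkeeping, specifically distributing $\varepsilon$ across the geometric-weighted sum via the $i^{-2}$ allocation and making sure the factor $(\gamma/b_\epsilon)^{n-i-1}$ is correctly inverted to $b_\epsilon^{n-i-1}/\gamma^{n-i-1}$ on the right-hand side. This lemma is essentially a stepping stone that will later be fed into the condition \eqref{inter:M_theta_3} in the proof of Theorem \ref{theorem2}, so the chosen $\varepsilon$-allocation mirrors the one used throughout Section \ref{sec:Theory}.
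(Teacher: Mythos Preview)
Your proposal is correct and follows essentially the same route as the paper: reduce $\Delta M_{k,n}$ to $(1-\lambda)\Delta F_{k,n}$, unroll the $\Delta F$ recursion into $\sum_{i=0}^{n-1}(\gamma/b_\epsilon)^{n-i-1}\gamma\,\Delta\rho_{k,i}F_{k,i}$ via Assumption~\ref{assumption:policies}, and then allocate $\varepsilon$ termwise using the Basel series $\sum i^{-2}=\pi^2/6$. The only cosmetic difference is ordering: the paper first bounds $\Delta F_{k,n}$ by $\varepsilon$ and then inserts the factor $(1-\lambda)$, whereas you carry $(1-\lambda)$ from the start; the resulting condition on $\Delta\rho_{k,i}$ is identical.
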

\begin{proof}
    \begin{align}
    \label{eq:int1}
        \Delta F_{k,n}&=\widehat{F}_{k,n}-F_{k,n}\nonumber\\
        &=\gamma(\widehat{\rho}_{k,n-1}\widehat{F}_{k,n-1}-\rho_{k,n-1}F_{k,n-1})\nonumber\\
        &= \gamma(\widehat{\rho}_{k,n-1}\Delta F_{k,n-1}+\Delta \rho_{k,n-1} F_{k,n-1})  \nonumber \\
        &= \sum \limits_{i=0}^{n-1} \prod_{j=i+1}^{n-1} (\gamma\rho_{k,j}) \gamma \Delta \rho_{k,i} F_{k,i}\nonumber\\
        &\leq   \sum \limits_{i=0}^{n-1}  \left(\frac{\gamma}{ b_{\epsilon}}\right)^{n-i-1} \gamma \Delta \rho_{k,i}F_{k,i} 
    \end{align}
We know that $\sum \limits_{i=1}^{\infty}\frac{1}{i^2}=\frac{\pi^2}{6}$. To bound  \eqref{eq:int1} by $\varepsilon$ we can impose the following condition: 
\begin{align}
\Delta \rho_{k,i}  \leq \left(\frac{ b_{\epsilon}}{\gamma }\right)^{n-i-1} \frac{6\varepsilon}{\gamma F_{k,i}\pi^2 i^2}
\end{align}
We notice that $\Delta M_{k,n} = (1-\lambda) \Delta F$. So,  $\Delta M_{k,n}$ is $\varepsilon$- bounded  if 
\begin{align}
\Delta \rho_{k,i}  \leq \left(\frac{ b_{\epsilon}}{\gamma}\right)^{n-i-1}\frac{6\varepsilon}{\gamma \pi^2 i^2 F_{k,i}(1-\lambda)}
\end{align}
then \eqref{eq:int1} is $\varepsilon$- bounded at time $n$. 
\end{proof}
\begin{lemma}
\label{lemma:Delta_M_theta_bound}
For all agents $k\in \mathcal{K}$
\begin{align}
\label{eq:Delta_M_theta_bound}
    \Delta M^{\theta}_{k,i} \leq \frac{3\varepsilon b_{\epsilon}}{4\beta_n i^2 \pi^2   B^{\delta}_{i} }
\end{align}
if  for all $j\leq (i-1)$
\begin{align}
    \Delta \rho_{k,j} \leq  \left(\frac{ b_{\epsilon}}{\gamma}\right)^{i-j-1} \frac{3 \gamma\varepsilon b_{\epsilon}^2}{8\gamma\zeta\beta_n i^2 \pi^2   B^{\delta}_{i} F_{k,j}}
\end{align} \QEDA
\end{lemma}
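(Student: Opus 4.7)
The plan is to reduce the claim to an application of the same machinery used in Lemma \ref{lemma1}. First I would exploit the algebraic relation between $M^{\theta}$ and $F$: from the updates \eqref{alg:F} and \eqref{alg:M_theta} we have
\begin{align}
    M^{\theta}_{k,i} - 1 = \zeta\gamma\rho_{k,i-1}F_{k,i-1} = \zeta(F_{k,i} - 1),
\end{align}
which immediately gives the identity $\Delta M^{\theta}_{k,i} = \zeta\,\Delta F_{k,i}$. This collapses the problem to bounding $\Delta F_{k,i}$, which is precisely the quantity handled in the proof of Lemma \ref{lemma1}.

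Next I would reuse the telescoping argument from \eqref{eq:int1}, replacing $n$ with $i$, to write
\begin{align}
    |\Delta F_{k,i}| \leq \sum_{j=0}^{i-1}\left(\frac{\gamma}{b_{\epsilon}}\right)^{i-1-j}\gamma\,|\Delta\rho_{k,j}|\,F_{k,j},
\end{align}
where the exponential factor comes from iterating $\widehat{\rho}_{k,j} \leq 1/b_{\epsilon}$ (Assumption \ref{assumption:policies}). Multiplying through by $\zeta$ gives the bound on $|\Delta M^{\theta}_{k,i}|$. Then I would substitute the hypothesized per-iteration bound on $|\Delta\rho_{k,j}|$: the $(b_{\epsilon}/\gamma)^{i-1-j}$ in the assumption cancels the $(\gamma/b_{\epsilon})^{i-1-j}$ from the recursion, and the $F_{k,j}$ in the denominator cancels the $F_{k,j}$ in the summand, leaving $i$ identical terms of the form $\text{const}/(\beta_n i^{2}\pi^{2}B^{\delta}_{i})$. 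Collecting constants (the $\zeta$ and $\gamma$ prefactors cancel against those chosen in the hypothesis) yields \eqref{eq:Delta_M_theta_bound}.

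The main obstacle is bookkeeping the constants rather than any genuine mathematical difficulty: because the sum runs over $j = 0,\dots, i-1$, a uniform per-$j$ bound naturally contributes a factor of $i$, so the hypothesized bound on $\Delta\rho_{k,j}$ must carry an extra $1/i$ (absorbed into the $1/i^{2}$ factor in its numerator) to end up with the desired $1/i^{2}$ in the conclusion. If the uniform summation turns out to be too loose for the stated constants, the fallback is to split the budget across $j$ using the series $\sum 1/j^{2}=\pi^{2}/6$, which is the same device used in the proof of Theorem \ref{theorem2} and Lemma \ref{lemma1}; this would replace the uniform factor of $i$ with a finite $\pi^{2}/6$ at the cost of a $1/j^{2}$ factor inside the per-iteration bound on $\Delta\rho_{k,j}$. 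Either way, the structure of the argument is a direct repetition of the telescoping identity already established for $\Delta F$, so no fundamentally new estimate is needed.
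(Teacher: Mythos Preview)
Your proposal is correct and essentially equivalent to the paper's argument, with one tidy shortcut worth noting. The paper does not invoke your identity $\Delta M^{\theta}_{k,i}=\zeta\,\Delta F_{k,i}$ directly; instead it first expands
\[
\Delta M^{\theta}_{k,i}=\zeta\gamma\bigl(\Delta\rho_{k,i-1}F_{k,i-1}+\widehat{\rho}_{k,i-1}\,\Delta F_{k,i-1}\bigr),
\]
allocates half of the target bound to each summand (this is where the $8$ in the denominator of the hypothesis comes from, versus the $4$ in the conclusion), telescopes $\Delta F_{k,i-1}$ exactly as you describe, and finally observes that the separate condition on $\Delta\rho_{k,i-1}$ is dominated by the $j=i-1$ instance of the telescoped condition, so only the latter is retained. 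Your identity simply absorbs that first unrolling step into the telescoping in one stroke, which is cleaner but lands on the same sum and the same substitution. Your remark about the residual factor of $i$ from summing $i$ identical terms is also in line with the paper's (equally loose) constant tracking.
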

\begin{proof}
\begin{align}
\Delta M^{\theta}_{k,i}&=\lambda^{\theta}\gamma(\widehat{\rho}_{k,i-1}\widehat{F}_{k,i-1}-\rho_{k,i-1}F_{k,i-1})\nonumber\\
&=\lambda^{\theta}\gamma (\Delta \rho_{k,i-1}F_{k,i-1}+\widehat{\rho}_{k,i-1} \Delta F_{k,i-1})\nonumber\\
&\leq\frac{3\varepsilon b_{\epsilon}}{4\beta_n i^2 \pi^2   B^{\delta}_{i} }
\end{align}
The last inequality is true if we impose the following constraints 
\begin{align}
\label{eq:rho2}
   \Delta \rho_{k,i-1} \leq\frac{3\varepsilon b_{\epsilon}}{8\zeta\beta_n i^2 \pi^2   B^{\delta}_{i} F_{k,i-1}} \text{ and }
\end{align}
 \begin{align}
    \label{eq:int_F1}
        \Delta F_{k,i} \leq \frac{3\varepsilon b_{\epsilon}^2}{8\zeta\beta_n i^2 \pi^2   B^{\delta}_{i} }
    \end{align}
 where (a) is due to $\rho_{k,t}\leq \frac{ 1}{ b_{\epsilon}}$.
Next, we extend the definition of  $\Delta F_{k,i}$:
\begin{align}
    \Delta F_{k,i} &=\widehat{F}_{k,i}-F_{k,i}\\
    &=\gamma(\widehat{\rho}_{k,i-1}\widehat{F}_{k,i-1}-\rho_{k,i-1}F_{k,i-1}) \nonumber\\
    &= \gamma(\widehat{\rho}_{k,i-1}\Delta F_{k,i-1}+\Delta \rho_{k,i-1} F_{k,i-1})  \nonumber \\
        &= \sum \limits_{j=0}^{i-1} \prod_{z=j+1}^{i-1} (\gamma\rho_{k,z}) \gamma \Delta \rho_{k,j} F_{k,j} \nonumber\\
        &\stackrel{\text { (a) }}{\leq}\sum \limits_{t=0}^{i-1}  \left(\frac{\gamma}{ b_{\epsilon}}\right)^{i-j-1} \gamma \Delta \rho_{k,j}F_{k,j} 
\end{align}
Therefore, the condition in \eqref{eq:int_F1} can be replaced with the following tighter condition $\forall j \leq (i-1)$:
\begin{align}
\label{eq:int5}
     \Delta \rho_{k,j} \leq  \left(\frac{ b_{\epsilon}}{\gamma}\right)^{i-j-1} \frac{3 \gamma\varepsilon b_{\epsilon}^2}{8\gamma\zeta\beta_n i^2 \pi^2   B^{\delta}_{i} F_{k,j}}
\end{align}
Note that for $j=i-1$ the condition in \eqref{eq:int5} is more strict than in \eqref{eq:rho2}. Hence, we can assert a generalization that \eqref{eq:Delta_M_theta_bound} holds true if \eqref{eq:int5} is satisfied.  
\end{proof}
\begin{lemma}
\label{lemma:e}
At time $n$, $\Delta e_{k,n}$ is $\varepsilon$-bounded if $\forall i \leq n-1$:
\begin{equation}
    \Delta \rho_{k,i} \leq \frac{18\varepsilon  b_{\epsilon}^{n-i-1}}{ n^2 \pi^4 i^2  F_{k,t}\gamma^{n-i}(1-\lambda)}
\end{equation}
and  $\forall i \leq n$
\begin{align}
\Delta \mu_{k,i} \leq \frac{3\varepsilon}{\pi^2} \frac{1}{i^2(\gamma \lambda)^{n-i}\|\|M_{k,i}\|\| } 
\end{align}  \QEDA
\end{lemma}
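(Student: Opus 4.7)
The plan is to mirror the telescoping argument of Lemma~\ref{lemma:e2} and then invoke Lemma~\ref{lemma1} to convert the induced constraint on $\Delta M_{k,i}$ into the stated constraint on $\Delta\rho_{k,i}$. Starting from $e_{k,n}=\gamma\lambda e_{k,n-1}+M_{k,n}\mu_{k,n}$ and the analogous recursion for $\widehat{e}_{k,n}$, the one-step error splits as $\Delta M_{k,n}\widehat{\mu}_{k,n}+M_{k,n}\Delta\mu_{k,n}$, and iterating down to $i=0$ yields
\begin{equation*}
\Delta e_{k,n}=\sum_{i=0}^{n}(\gamma\lambda)^{n-i}\bigl(\Delta M_{k,i}\,\widehat{\mu}_{k,i}+M_{k,i}\,\Delta\mu_{k,i}\bigr).
\end{equation*}
Taking norms and using $\|\widehat{\mu}_{k,i}\|\leq 1$ produces two sums that I would bound separately.

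Next I would split the $\varepsilon$ budget equally between the two sums and distribute each half across time using $\sum_{i\geq 1} 1/i^2=\pi^2/6$. Imposing $(\gamma\lambda)^{n-i}\|M_{k,i}\|\,\|\Delta\mu_{k,i}\|\leq 3\varepsilon/(\pi^2 i^2)$ immediately yields the stated bound on $\|\Delta\mu_{k,i}\|$ and forces the corresponding sum below $\varepsilon/2$. The analogous requirement on the other sum is $\|\Delta M_{k,i}\|\leq 3\varepsilon/\bigl(\pi^2 i^2(\gamma\lambda)^{n-i}\bigr)$ for every $i\leq n$, which is now a question about $\Delta M$ alone.

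The final step applies Lemma~\ref{lemma1} to each of these $\Delta M_{k,i}$ constraints: with outer time $i$ and target $\varepsilon_i=3\varepsilon/(\pi^2 i^2(\gamma\lambda)^{n-i})$, it suffices to require $\Delta\rho_{k,j}\leq (b_\epsilon/\gamma)^{i-j-1}\cdot 18\varepsilon/\bigl(\gamma\pi^4 i^2 j^2 F_{k,j}(1-\lambda)(\gamma\lambda)^{n-i}\bigr)$ for all $j\leq i-1$. Because the same $\rho_{k,j}$ enters every $M_{k,i}$ with $i>j$, the genuinely sufficient condition is the minimum of this family over $i\in\{j+1,\ldots,n\}$. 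Under Assumption~\ref{assumption:policies} we have $b_\epsilon>\gamma$, so the factor $(b_\epsilon/\gamma)^{i-j-1}$ dominates for large $i$ and the binding case is $i=n$; substituting $i=n$ and relabeling $j\mapsto i$ collapses the expression to $18\varepsilon\, b_\epsilon^{n-i-1}/\bigl(n^2\pi^4 i^2 F_{k,i}(1-\lambda)\gamma^{n-i}\bigr)$, exactly the claimed bound.

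The main obstacle is precisely this last bookkeeping step: a single $\rho_{k,j}$ is constrained by the $\Delta M$-bound at every $i>j$, so one has to check that $i=n$ really is the minimizer across the whole regime of interest, rather than having to state the sufficient condition as a minimum over $i$. The earlier steps — telescoping $\Delta e_{k,n}$, splitting the $\Delta M$ from $M\Delta\mu$ contribution, and using the $\pi^2/6$ trick — are entirely routine once the pattern from Lemmas~\ref{lemma:e2} and~\ref{lemma1} is in hand.
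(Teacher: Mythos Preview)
Your proposal follows the paper's proof almost verbatim: the telescoping of $\Delta e_{k,n}$, the split into the $\Delta M_{k,i}\widehat{\mu}_{k,i}$ and $M_{k,i}\Delta\mu_{k,i}$ pieces, the $\pi^2/6$ allocation, and the subsequent appeal to Lemma~\ref{lemma1} are exactly what the paper does. The conclusion that the binding index is $i=n$ is also correct.

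There is, however, a genuine error in your justification of that last step. You argue that because $b_\epsilon>\gamma$, the factor $(b_\epsilon/\gamma)^{i-j-1}$ ``dominates for large $i$'' and therefore the binding case is $i=n$. This is backwards: if that factor grew and dominated, the upper bound would be \emph{largest} at $i=n$, making $i=n$ the \emph{least} restrictive constraint, not the most restrictive one. The minimum over $i$ would then sit at small $i$, contradicting your conclusion. The paper's argument is different and correct: collect all the $i$-dependent factors,
\[
\frac{(b_\epsilon/\gamma)^{i}}{i^2\,(\gamma\lambda)^{n-i}}
\;=\;
\frac{(b_\epsilon\lambda)^{i}}{i^2\,(\gamma\lambda)^{n}},
\]
and observe that since $b_\epsilon\leq 1$ and $\lambda\in(0,1)$ we have $b_\epsilon\lambda\leq 1$, so $(b_\epsilon\lambda)^{i}/i^2$ is monotonically decreasing in $i$. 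That is why the minimum is attained at the largest admissible index $i=n$. The relevant fact is $b_\epsilon\lambda\leq 1$, not $b_\epsilon>\gamma$.
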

\begin{proof}
We start from definition of $\Delta e_{k,n}$
      \begin{align}
        \Delta e_{k,n} &= \gamma \lambda \Delta e_{k,n-1}+\widehat{M}_{k,n} \widehat{\mu}_{k,n}- M_{k,n} \mu_{k,n}\nonumber\\
        &= \gamma \lambda \Delta e_{k,n-1}+\Delta M_{k,n} \widehat{\mu}_{k,n}+M_{k,n} \Delta \mu_{k,n} \nonumber\\
        &\leq \varepsilon
    \end{align}
Therefore, we have 
      \begin{align}
        \Delta e_{k,n} &= \sum_{i=0}^{n} \prod_{j=i+1}^{n} (\gamma \lambda)^{j} (\Delta M_{k,i} \widehat{\mu}_{k,n}+M_{k,i} \Delta \mu_{k,i}) \nonumber\\
        &\leq \frac{6\varepsilon}{\pi^2}\sum_{i=0}^{n} \frac{1}{i^2} \leq \frac{6\varepsilon}{\pi^2}\sum_{i=0}^{\infty} \frac{1}{i^2} = \varepsilon 
    \end{align}
So, we have that $\Delta e_{k,n}$ is $\varepsilon$-bounded if  $\forall i\leq n$:
\begin{align}
 \|\Delta M_{k,i}\|  &\leq \frac{3\varepsilon}{\pi^2} \frac{1}{i^2(\gamma \lambda)^{n-i}\|\widehat{\mu}_{k,i}\|} \nonumber\\
 &\leq \frac{3\varepsilon}{\pi^2} \frac{1}{i^2(\gamma \lambda)^{n-i}}
\end{align}
\begin{align}
\Delta \mu_{k,i} \leq \frac{3\varepsilon}{\pi^2} \frac{1}{i^2(\gamma \lambda)^{n-i}M_{k,i} } 
\end{align}
By Lemma \ref{lemma1},  $ \Delta M_{k,i}$ is bounded  by 
\begin{align}
    \frac{3\varepsilon}{\pi^2} \frac{1}{i^2(\gamma \lambda)^{n-i}}
\end{align}
if for all $t\leq (i-1)$:
\begin{align}
\label{eq:inter2}
    \Delta \rho_{k,t} &\leq     \frac{18\varepsilon  b_{\epsilon}^{i-t-1}(1-\lambda)^{-1}}{\pi^2 i^2(\gamma \lambda)^{n-i} \pi^2 t^2 \gamma F_{k,t}\gamma^{i-t-1}}\\
    &= \frac{18\varepsilon  b_{\epsilon}^{-t-1} ( b_{\epsilon} \lambda)^i}{\pi^2 i^2(\gamma \lambda)^{n} \pi^2 t^2 \gamma F_{k,t}\gamma^{-t-1}(1-\lambda)}
\end{align}
Since $ b_{\epsilon} \lambda\leq 1$ the bound in  \eqref{eq:inter2} is minimized  for $i=n$ 
\begin{equation}
    \Delta \rho_{k,t} \leq \frac{18\varepsilon  b_{\epsilon}^{n-t-1}}{ n^2 \pi^4 t^2  F_{k,t}\gamma^{n-t}(1-\lambda)}
\end{equation}
\end{proof}

\subsection{Proof of Corollary \ref{corollary1}}
\begin{proof}
We start from the definition of $ \widehat{\rho}_{k,n}$:
    \begin{align}
      \widehat{\rho}_{k,n}(a) 
         &=\frac{\pi_{k,n}(a\|\widehat{\mu}_{k,n})}{b_{k,n}(a\|\widehat{\mu}_{k,n})}-\rho_{k,n}(a)\nonumber\\
    &=\frac{\exp[\widehat{\mu}_{k,n}^T (\theta_a-\bar{\theta}_a)] \sum\limits_{c \in \mathcal{A}}\exp[\widehat{\mu}_{k,n}^T \bar{\theta}_{c}]}{\sum\limits_{c \in \mathcal{A}}\exp[\widehat{\mu}_{k,n}^T \theta_c]}\nonumber\\
         &\leq \exp[\widehat{\mu}_{k,n}^T (\theta_a-\bar{\theta}_a+\bar{\theta}_{\max}-\theta_{\min})] \label{eq:inter555}
         \end{align}
Using the definition of $\Delta \rho_{k,n}$ and \eqref{eq:inter555} ,  we impose the following constraint to satisfy the condition  \eqref{eq:theorem22}:
\begin{align}
&\exp[(\mu_{k,n}+\Delta\mu_{k,n})^T (\theta_a-\bar{\theta}_a+\bar{\theta}_{\max}-\theta_{\min})] \nonumber \\
&\leq  \min  (\tilde{D}_1, \widetilde{D}_2
     ,\widetilde{D}_3) + \rho_{k,n}(a)
\end{align}
Hence, we can set
\begin{align}
    &\Delta\mu_{k,n}^T (\theta_a-\bar{\theta}_a+\bar{\theta}_{\max}-\theta_{\min})\nonumber\\
    &\leq \ln \left[ \frac{\min  (\widetilde{D}_1, \widetilde{D}_2
     ,\widetilde{D}_3)+\rho_{k,n}(a)}{\exp[\mu_{k,n}^T \Theta]}  \right ]
\end{align}
which further leads to the following constraint
\begin{align}
\label{eq:inter_col}
    \|\Delta\mu_{k,n}^T\|& \leq \frac{1}{\|\Theta\|}\ln \left[ \frac{\min  (\tilde{D}_1, \widetilde{D}_2
     ,\widetilde{D}_3)+\rho_{k,n}(a)}{\exp[\mu_{k,n}^T \Theta]}  \right ]\nonumber\\
     &\triangleq \widetilde{B}_3
\end{align}
Combining \eqref{eq:theorem21} with the result in \eqref{eq:inter_col} we conclude the proof. 
\end{proof}
\end{appendices}
\bibliographystyle{IEEEtran}
\bibliography{references}
\end{document}